\definecolor{light-gray}{gray}{0.90}
\theoremstyle{plain}
\newtheorem{theorem}{Theorem}[section]
\theoremstyle{definition}
\theoremstyle{remark}
\icmltitlerunning{ProGCL: Rethinking Hard Negative Mining in Graph Contrastive Learning}
\begin{document}
\twocolumn[
\icmltitle{ProGCL: Rethinking Hard Negative Mining in Graph Contrastive Learning}
\begin{icmlauthorlist}
\icmlauthor{Jun Xia}{wlu,wias}
\icmlauthor{Lirong Wu}{wlu,wias}
\icmlauthor{Ge Wang}{wlu,wias}
\icmlauthor{Jintao Chen}{zju}
\icmlauthor{Stan Z. Li}{wlu,wias}
\end{icmlauthorlist}
\icmlaffiliation{wlu}{Westlake University}
\icmlaffiliation{wias}{Westlake Institute for Advanced Study}
\icmlaffiliation{zju}{School of Computer Science, Zhejiang University}
\icmlcorrespondingauthor{Jun Xia}{xiajun@westlake.edu.cn}
\icmlkeywords{Machine Learning, ICML}
\vskip 0.3in
]
\printAffiliationsAndNotice{}
\begin{abstract}
Contrastive Learning (CL) has emerged as a dominant technique for unsupervised representation learning which embeds augmented versions of the anchor close to each other (positive samples) and pushes the embeddings of other samples (negatives) apart. As revealed in recent studies, CL can benefit from hard negatives (negatives that are most similar to the anchor). However, we observe limited benefits when we adopt existing hard negative mining techniques of other domains in Graph Contrastive Learning (GCL). We perform both experimental and theoretical analysis on this phenomenon and find it can be attributed to the message passing of Graph Neural Networks (GNNs). Unlike CL in other domains, most hard negatives are potentially false negatives (negatives that share the same class with the anchor) if they are selected merely according to the similarities between anchor and themselves, which will undesirably push away the samples of the same class. To remedy this deficiency, we propose an effective method, dubbed \textbf{ProGCL}, to estimate the probability of a negative being true one, which constitutes a more suitable measure for negatives' hardness together with similarity. Additionally, we devise two schemes (i.e., \textbf{ProGCL-weight} and \textbf{ProGCL-mix}) to boost the performance of GCL. Extensive experiments demonstrate that ProGCL brings notable and consistent improvements over base GCL methods and yields multiple state-of-the-art results on several unsupervised benchmarks or even exceeds the performance of supervised ones. Also, ProGCL is readily pluggable into various negatives-based GCL methods for performance improvement. We release the code at \textcolor{magenta}{\url{https://github.com/junxia97/ProGCL}}.
\end{abstract}
\begin{figure}[ht]
    \subfigure[CIFAR-10 (Image)]{
    \label{fig1-a}
    \includegraphics[width=0.233\textwidth]{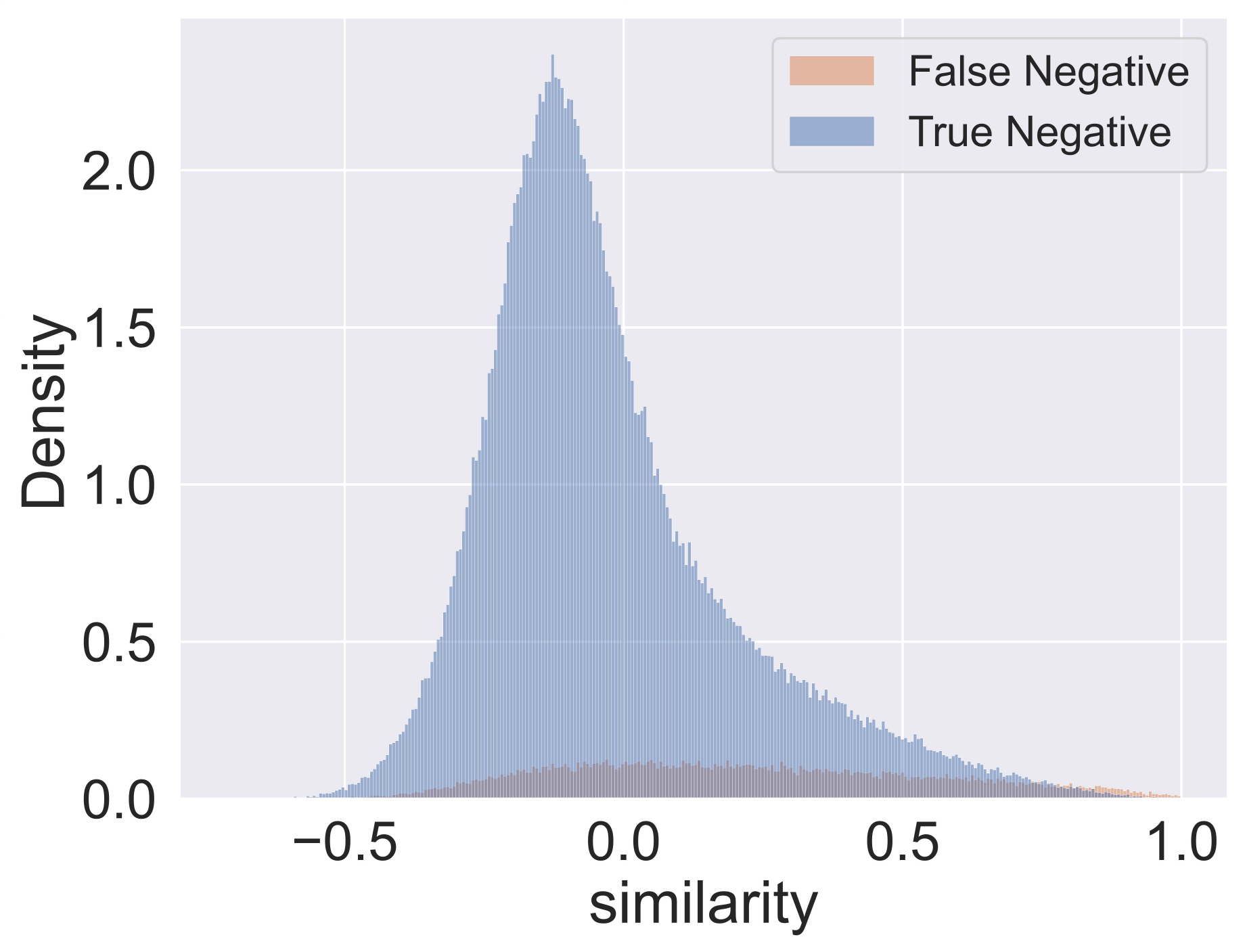}}
    \hspace{-3mm}
    \subfigure[Coauthor-CS (Graph)]{
    \label{fig1-b}
    \includegraphics[width=0.233\textwidth]{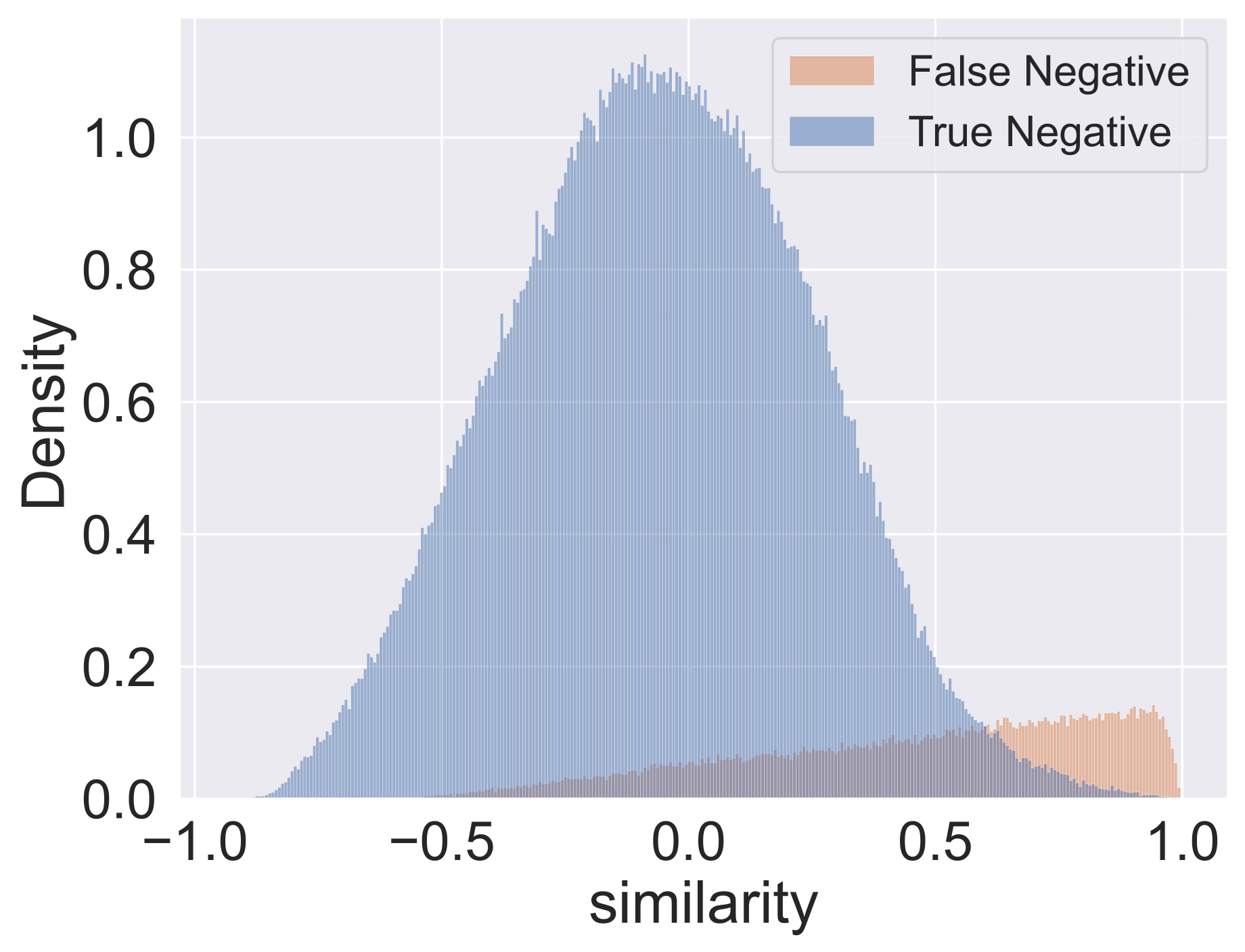}}
    \vspace{-3.6pt}
    \caption{Similarity (cosine similarity between normalized embeddings of anchor and negatives) histograms of negatives. We adopt SimCLR~\cite{chen2020simple} and GCA~\cite{zhu2021graph} for CIFAR-10 and Coauthor-CS respectively. False (or true) negatives denote the samples that are (or not) from the same class with the anchor. Unlike CL, most negatives with larger similarities to the anchor are false ones in GCL. More examples are in the appendix.}
    \label{fig_first}
    \vspace{-3.6pt}
\end{figure}
\section{Introduction}
\label{intro}
Recently, Contrastive Learning (CL) has demonstrated unprecedented unsupervised performance in computer vision \cite{he2020momentum,chen2020simple}, natural language processing~\cite{gao2021simcse,zheng-etal-2022-using} and graph representation learning~\cite{velickovic2019deep,10.1145/3485447.3512156,zhu2021graph}. As with metric learning~\cite{kaya2019deep}, existing works both theoretically and practically validate that hard negatives contribute more to CL. For example, HCL~\cite{robinson2021contrastive} develops a family of unsupervised sampling methods for selecting hard negatives and MoCHi~\cite{kalantidis2020hard} mixes the hard negatives to synthesize more hard negative ones. However, we observe minor improvement or even significant performance drop when we adopt these negative mining techniques in GCL (the results can be seen in Table~\ref{table5}). Concurrent to our work, \citet{zhu2021an} also observe that existing hard negative mining techniques that work well in other domains bring limited benefits to GCL. Unfortunately, they didn't provide any solution to tackle this issue.

To explain these phenomena, we first plot the negatives' distributions over similarity of various datasets in Figure~\ref{fig_first}. Note that we do not observe significant changes of negatives' distribution (keep unimodal as shown in Figure~\ref{fig1-a}) during the training process of SimCLR~\cite{chen2020simple} on CIFAR-10 and other image datasets. However, for the first stage of GCL, the negatives' distribution is bimodal for a long period as Figure~\ref{fig1-b} and then progressively transit to unimodal distribution as CL at the second stage. Similar phenomena for more datasets can be found in the appendix. The difference of negatives' distribution between CL and GCL can be attributed to the unique message passing of Graph Neural Networks, which we further discuss in section~\ref{analysis}. These provide explanations for the poor performance of existing negative mining techniques in GCL. Specifically, they regard the negatives that are most similar to anchor points as hard ones across all the training process. However, as shown in Figure~\ref{fig1-b}, most selected ``hard'' negatives in this way are false negatives for GCL indeed, which will undesirably push away the semantically similar samples and thus degrade the performance. The existence of false negatives is termed as \emph{sampling bias} in DCL~\cite{chuang2020debiased}. However, as reported in Table~\ref{table5} and \citet{zhu2021an}, DCL brings performance drop for GCL because the sampling bias is severer in GCL. Now, we are naturally motivated to ask following questions: \emph{Are there better alternatives to measure negatives' hardness considering the issue of false negatives in GCL? Can we devise more suitable methods to eliminate severer bias in GCL?}

To answer these questions, we argue that true and false negatives can be distinguished by fitting a two-component (true-false) beta mixture model (BMM) on the similarity. The posterior probability of a negative being true one under BMM can constitute a more suitable measure for negatives' hardness accompanied with similarity. With the novel measure, we devise two schemes (ProGCL-weight and ProGCL-mix) for further improvement of negatives-based GCL methods. To the best of our knowledge, our work makes one of the pioneering attempts to study hard negative mining in \emph{node-level} GCL. We highlight the following contributions:
\vspace{-2.8pt}
\begin{itemize}
\item We demonstrate the difference of negatives' distribution between GCL and CL and explain why existing hard negative mining techniques can not work well in GCL with both theoretical and experimental analysis.
\item We propose to utilize BMM to estimate the probability of a negative being true one relative to a specific anchor. Combined with the similarity, we obtain a more suitable measure for negatives' hardness.
\item We devise two schemes (i.e., ProGCL-weight and ProGCL-mix) that are more suitable for hard negative mining in GCL.
\item ProGCL brings notable and consistent improvements over base GCL methods and yields multiple state-of-the-art results on several unsupervised benchmarks or even exceeds the performance of supervised ones. Also, it can boost various negatives-based GCL methods for further improvements.
\end{itemize}

\section{Related Work}
\subsection{Graph Contrastive Learning (GCL)}
Recently, GCL has gained popularity in unsupervised graph representation learning, which can get rid of the resource-intensive annotations~\cite{xia2021towards,tan2021co,9747279}. Initially, DGI~\cite{velickovic2019deep} and InfoGraph~\cite{Sun2020InfoGraph:} are proposed to obtain expressive representations for graphs or nodes via maximizing the mutual information between graph-level representations and substructure-level representations of different granularity. Similarly, GMI~\cite{peng2020graph} adopts two discriminators to directly measure mutual information between input and representations of both nodes and edges. Additionally, MVGRL~\cite{hassani2020contrastive} proposes to learn both node-level and graph-level representation by performing node diffusion and contrasting node representation to augmented graph representation. Different from our work, GraphCL~\cite{You2020GraphCL} and its variants~\cite{suresh2021adversarial,xu2021infogcl,you2021graph,you2022bringing,li2021disentangled} adopt SimCLR framework and design various augmentations for graph-level representation learning. For node-level representations, GRACE~\cite{zhu2020deep} and its variants~\cite{zhu2021graph,tong2021directed} maximize the agreement of node embeddings across two corrupted views of the graph. More recently, SimGRACE~\cite{10.1145/3485447.3512156}, BGRL~\cite{thakoor2021bootstrapped} and CCA-SSG~\cite{zhang2021canonical} try to simplify graph contrastive learning via discarding the negatives, parameterized mutual information estimator or even data augmentations. In this paper, we consider hard negatives mining to further boost GCL on \emph{node-level representation learning}, which is still under-explored. For the limited space, we recommend readers refer to a recent review~\cite{xia2022survey} for more relevant literatures.
\subsection{Hard Negative Mining in Contrastive Learning}
 For contrastive learning, HCL~\cite{robinson2021contrastive} and MoCHi~\cite{kalantidis2020hard} are proposed to emphasize or synthetize hard negatives. Additionally, Ring~\cite{wu2021conditional} introduces a family of mutual information estimators that sample negatives in a “ring” around each positive.  However, as shown in Table~\ref{table5}, these techniques which emphasize hard negatives don't work well in GCL. For GCL, Zhao et al.~\cite{zhao2021graph} utilize the clustering pseudo labels to alleviate false negative issue, which suffers from heavy computational overhead and will degrade the performance when confronted with multi-class datasets. CuCo~\cite{ijcai2021-317} sorts the negatives from easy to hard with similarity for graph-level contrastive learning and proposes to automatically select and train negative samples with curriculum learning. Instead, we study node-level GCL with message passing among instances. Additionally, Zhu et al.~\cite{Zhu:2022se} propose a structure-enhanced negative mining scheme which discovers hard negatives in each metapath-induced view in heterogeneous graph. However, it is not suitable for homogeneous graphs without metapath.
\section{Methodology}
\begin{figure}[ht]
    \begin{center}
    \includegraphics[width=0.45\textwidth]{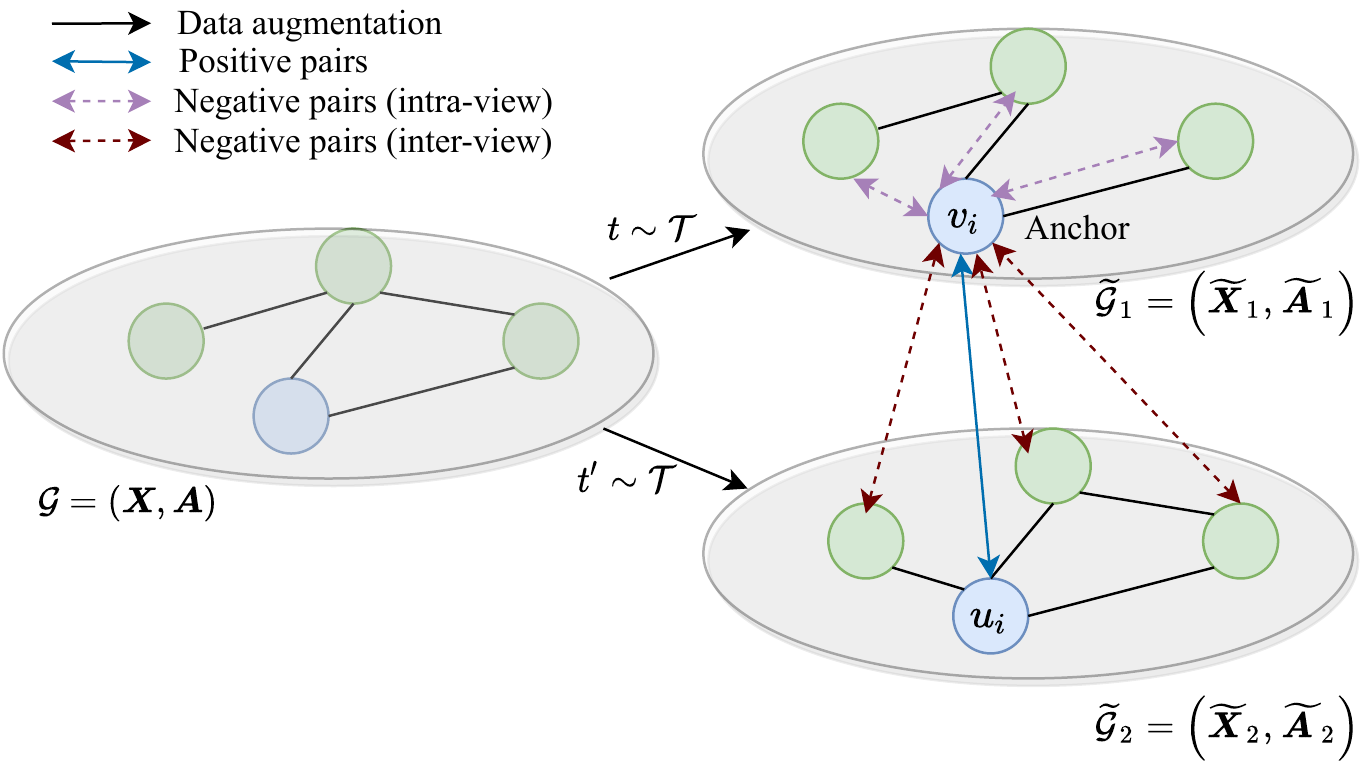}
    \end{center}
    \caption{Schematic diagram of node-level GCL framework.}
    \label{fig_2}
\end{figure}
\subsection{Preliminaries and Notations}
Let $\mathcal{G}$ = $(\mathcal{V},\mathcal{E})$ be the graph, where $\mathcal{V} = \left\{v_{1}, v_{2}, \cdots, v_{N}\right\}, \mathcal{E} \subseteq \mathcal{V} \times \mathcal{V}$ denote the node set and edge set respectively. Additionally, $\boldsymbol{X} \in \mathbb{R}^{N \times F}$ and $\boldsymbol{A} \in\{0,1\}^{N \times N}$ are the feature matrix and the adjacency matrix. $\boldsymbol{f}_{i} \in \mathbb{R}^{F}$ is the feature of $v_i$, and $\boldsymbol{A}_{i j}=1$ iff $\left(v_{i}, v_{j}\right) \in \mathcal{E}$. Our objective is to learn a GNN encoder $f(\boldsymbol{X}, \boldsymbol{A}) \in \mathbb{R}^{N \times F^{\prime}}$ to embed nodes in low dimensional space without label information. These low dimensional representations can be used in downstream tasks including node classification. As shown in Figure~\ref{fig_2},  we sample two augmentation functions $t \sim \mathcal{T}$ and $t^{\prime} \sim \mathcal{T}$, here $\mathcal{T}$ is the set of all augmentation functions. We can then obtain two views for $\mathcal{G}$,  $\widetilde{\mathcal{G}}_{1}=t(\mathcal{G})$ and $\widetilde{\mathcal{G}}_{2}=t^{\prime}(\mathcal{G})$. Given  $\widetilde{\mathcal{G}}_{1}=(\widetilde{\boldsymbol{X}}_{1}, \widetilde{\boldsymbol{A}}_{1})$ and $\widetilde{\mathcal{G}}_{2}=(\widetilde{\boldsymbol{X}}_{2}, \widetilde{\boldsymbol{A}}_{2})$, we denote node embeddings in the two views as $\boldsymbol{U}=f(\widetilde{\boldsymbol{X}}_{1}, \widetilde{\boldsymbol{A}}_{1})$ and $\boldsymbol{V}=f(\widetilde{\boldsymbol{X}}_{2}, \widetilde{\boldsymbol{A}}_{2})$. For any node $v_i$, its embedding in one view $\boldsymbol{v}_{i}$ is regarded as the anchor. The embedding  $\boldsymbol{u_i}$ in the other view is the positive sample and the embeddings of other nodes in both views are negatives. Similar to the InfoNCE~\cite{oord2018representation}, the training objective for each positive pair $\left(\boldsymbol{u}_{i}, \boldsymbol{v}_{i}\right)$ is,
\begin{equation}
\label{eq1}
\begin{aligned}
    \ell&\left(\boldsymbol{u}_{i},\boldsymbol{v}_{i}\right)=\\
    &\log \frac{e^{\theta\left(\boldsymbol{u}_{i}, \boldsymbol{v}_{i}\right) / \tau}}{\underbrace{e^{\theta\left(\boldsymbol{u}_{i}, \boldsymbol{v}_{i}\right) / \tau}}_{\text{positive pair }}+\underbrace{\sum_{k\neq i}e^{\theta\left(\boldsymbol{u}_{i}, \boldsymbol{v}_{k}\right) / \tau}}_{\text{inter-view negative pairs}}+\underbrace{\sum_{k\neq i}e^{\theta\left(\boldsymbol{u}_{i}, \boldsymbol{u}_{k}\right) / \tau}}_{\text{intra-view negative pairs}}},
\end{aligned}
\end{equation}
where the critic $\theta(\boldsymbol{u}, \boldsymbol{v})= s(g(\boldsymbol{u}), g(\boldsymbol{v}))$. Here $s(\cdot, \cdot)$ is the cos similarity and $g(\cdot)$ is linear projection (two-layer perceptron model) to enhance the expression power of the critic function~\cite{chen2020simple}. With the symmetry of the two views, we can then define the overall loss as the average of all the positive pairs,
\begin{equation}
\label{eq2}
\mathcal{J}=-\frac{1}{2 N} \sum_{i=1}^{N}\left[\ell\left(\boldsymbol{u_{i}}, \boldsymbol{v_{i}}\right)+\ell\left(\boldsymbol{v_{i}}, \boldsymbol{u_{i}}\right)\right].
\end{equation}
Many GCL methods are built on this framework~\cite{zhu2021an,qiu2020gcc,zhu2021graph,Jin2021MultiScaleCS}, which motivates us to study hard negative mining on it.

\subsection{Experimental and Theoretical Analysis}
\label{analysis}
\begin{figure}[ht]
    \subfigure[GCN]{
    \label{figany1}
    \includegraphics[width=0.23\textwidth]{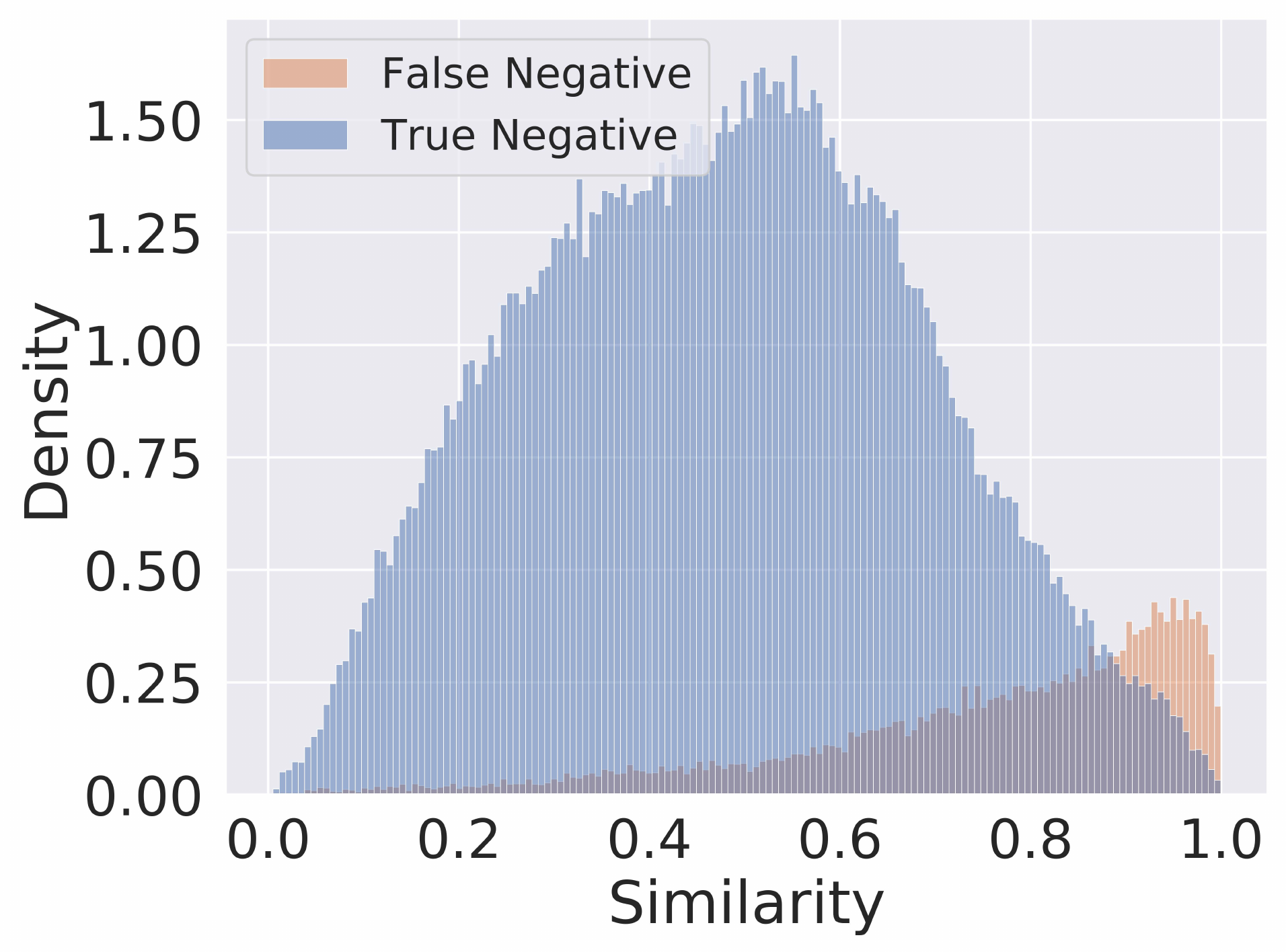}}
    \hspace{-3mm}
    \subfigure[MLP]{
    \label{figany2}
    \includegraphics[width=0.23\textwidth]{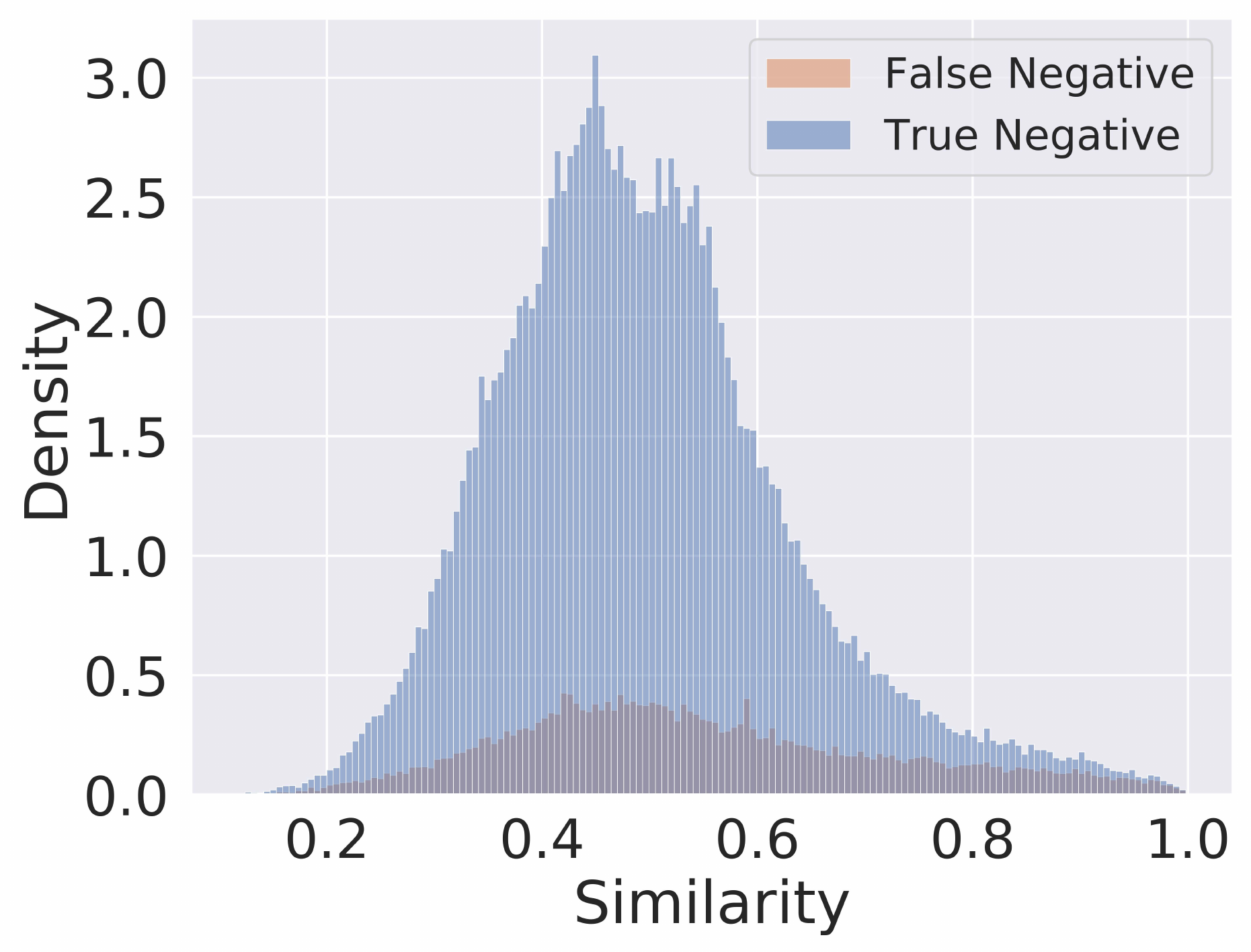}}
    \caption{Similarity (after Min-Max normalization) histograms of GCA~\cite{zhu2021graph} with GCN (with message passing) or MLP (without message passing) as encoder on Amazon-Photo.}
    \label{figanysis}
\end{figure}
In Figure~\ref{figanysis}, we study the message passing's role in GCL via replacing 2-layer GCN~\cite{kipf2016semi} that is composed of message passing and multi-layer perceptron (MLP) with a 2-layer MLP encoder only. As can be observed, the similarity histograms of GCL will be similar to that of CL without message passing, which verifies that message passing of GNN encoder is the key factor for the difference of negatives' distribution between CL and GCL. More specifically, for the first bimodal stage of GCL, the message passing in GCL enlarges the similarities between neighbour nodes which often share the same class. For the second unimodal stage, instance discrimination of GCL occupies a position of prominence and pushes away all the other samples regardless of their semantic class. Theoretically, for embeddings of any nodes pair in the graph $\mathcal{G}$, we can compare their distance before and after message passing and show that the distance will be decreased with the process, as formally induced in Theorem~\ref{theorem}.
\begin{theorem}
\label{theorem}
$\mathcal{G}$ is a non-bipartile and connected graph with $N$ nodes $\mathcal{V}=\{v_1, \dots,v_N\}$, and $\boldsymbol{X}_i^{(\tau)}$ is the embedding of node $v_i$ after $\tau$ times message passing ($\boldsymbol{X}_i^{(0)}=\boldsymbol{f}_i$). For large enough $\tau$,  $||\boldsymbol{X}_i^{(\tau)}-\boldsymbol{X}_j^{(\tau)}||_2 \le ||\boldsymbol{X}_i^{(0)}-\boldsymbol{X}_j^{(0)}||_2$.
\end{theorem}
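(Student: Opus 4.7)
The plan is to recast message passing as repeated multiplication by a graph propagation matrix $P$ (for concreteness, the symmetrically normalized adjacency $\tilde{D}^{-1/2}\tilde{A}\tilde{D}^{-1/2}$ with self-loops, or the random-walk variant $\tilde{D}^{-1}\tilde{A}$) so that the layer-$\tau$ embeddings are $\boldsymbol{X}^{(\tau)} = P^{\tau} \boldsymbol{X}^{(0)}$, and then to invoke standard Perron--Frobenius / spectral theory to show that $P^{\tau}$ converges to a rank-one operator which collapses every pair of rows to a common vector. Once this is established, the inequality follows for large $\tau$ simply because the left-hand side tends to $0$.

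First I would record the spectral facts about $P$ on a connected non-bipartite graph: $P$ is similar to a symmetric matrix with eigenvalues $1 = \lambda_{1} > \lambda_{2} \ge \cdots \ge \lambda_{N} > -1$. Connectedness makes $\lambda_{1}=1$ simple; non-bipartiteness is precisely what is needed to exclude the eigenvalue $-1$ (without it $P^{\tau}$ would oscillate rather than converge). Setting $\rho := \max_{k\ge 2} |\lambda_{k}| < 1$, the spectral decomposition $P = \sum_{k} \lambda_{k} \boldsymbol{u}_{k}\boldsymbol{u}_{k}^{\top}$ gives geometric convergence $P^{\tau} \to \boldsymbol{u}_{1}\boldsymbol{u}_{1}^{\top}$ at rate $\rho^{\tau}$. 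Because the top (right) eigenvector is, up to normalization, the all-ones vector (in the random-walk case) or its degree-weighted analogue (in the symmetric case), the $i$-th row of $P^{\tau}\boldsymbol{X}^{(0)}$ converges to a limit that, after an appropriate scalar rescaling by a function of $\tilde{d}_{i}$, is the same for every node. Writing $\boldsymbol{X}_{i}^{(\tau)} = \boldsymbol{c}_{i} + \boldsymbol{r}_{i}^{(\tau)}$ with $\boldsymbol{c}_{i}-\boldsymbol{c}_{j}=0$ and $\|\boldsymbol{r}_{i}^{(\tau)}\|_{2} = O(\rho^{\tau})\|\boldsymbol{X}^{(0)}\|_{2}$, the triangle inequality gives
\[
\|\boldsymbol{X}_{i}^{(\tau)} - \boldsymbol{X}_{j}^{(\tau)}\|_{2} \le \|\boldsymbol{r}_{i}^{(\tau)}\|_{2} + \|\boldsymbol{r}_{j}^{(\tau)}\|_{2} = O(\rho^{\tau}) \xrightarrow{\tau\to\infty} 0.
\]
Choosing $\tau$ so that this upper bound is at most $\|\boldsymbol{X}_{i}^{(0)} - \boldsymbol{X}_{j}^{(0)}\|_{2}$ then yields the claimed inequality; the degenerate case $\boldsymbol{X}_{i}^{(0)} = \boldsymbol{X}_{j}^{(0)}$ is handled separately by noting that the limit of the left-hand side is $0$, which equals the right-hand side.

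The main obstacle is the spectral step, specifically verifying that non-bipartiteness rules out $\lambda = -1$; here I would either cite the classical Perron--Frobenius theorem for irreducible aperiodic nonnegative matrices or give a short self-contained argument showing that a sign-flipping eigenvector of $-1$ would induce a bipartition of $\mathcal{G}$, contradicting the hypothesis. A secondary subtlety is the choice of propagation matrix: the statement is cleanest for a row-stochastic $P$, where every row of $P^{\infty}\boldsymbol{X}^{(0)}$ is genuinely identical, whereas for the symmetric normalization the limiting rows differ by a factor $\sqrt{\tilde{d}_{i}}$ and one needs a slightly more careful bookkeeping of the residual. In both cases the quantitative geometric rate $O(\rho^{\tau})$ is all that the final comparison with $\|\boldsymbol{X}_{i}^{(0)}-\boldsymbol{X}_{j}^{(0)}\|_{2}$ requires, so the argument goes through uniformly.
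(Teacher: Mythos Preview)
Your overall strategy---diagonalize the propagation operator, use connectedness plus non-bipartiteness to get a strict spectral gap $\rho<1$, and let the sub-leading components decay like $\rho^{\tau}$---is exactly what the paper does. The paper fixes $P=\hat D^{-1/2}\hat A\hat D^{-1/2}$, expands $\boldsymbol X^{(\tau)}=P^{\tau}\boldsymbol X^{(0)}$ in the eigenbasis, observes $-1<\lambda_1<\cdots<\lambda_N=1$, and concludes by letting $\tau$ grow.

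There is, however, a genuine gap in your handling of the symmetric normalization, and it is precisely the point you flag and then wave away. For $P=\hat D^{-1/2}\hat A\hat D^{-1/2}$ the Perron eigenvector is $\boldsymbol u_1\propto\hat D^{1/2}\mathbf 1$, so $P^{\tau}\boldsymbol X^{(0)}\to\boldsymbol u_1\boldsymbol u_1^{\top}\boldsymbol X^{(0)}$ has $i$-th row equal to $\sqrt{\hat d_i}\,\boldsymbol z$ for a fixed $\boldsymbol z$. The limiting rows are \emph{not} equal unless the graph is regular, so your claim $\boldsymbol c_i-\boldsymbol c_j=0$ fails, and $\|\boldsymbol X_i^{(\tau)}-\boldsymbol X_j^{(\tau)}\|_2\to\bigl|\sqrt{\hat d_i}-\sqrt{\hat d_j}\bigr|\,\|\boldsymbol z\|_2$, which is generically positive. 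No amount of ``more careful bookkeeping of the residual'' helps, because the obstruction sits in the leading term, not the residual; indeed the inequality is false as stated: on any non-regular connected non-bipartite graph, take $\boldsymbol X^{(0)}$ with all rows equal, so the right-hand side is $0$ for every pair $(i,j)$, while the left-hand side tends to something strictly positive whenever $\hat d_i\neq\hat d_j$. The paper's own proof slips at this same spot---it silently writes a $0$ in the coordinate of the top eigenvector $\boldsymbol e_N$, i.e.\ it assumes $\boldsymbol e_N^{(i)}=\boldsymbol e_N^{(j)}$, which holds only for regular graphs (and its formula $\boldsymbol e_N=\hat D^{-1/2}\mathbf 1$ has the exponent inverted; the eigenvector is $\hat D^{1/2}\mathbf 1$). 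Your argument is fully correct for the row-stochastic choice $P=\hat D^{-1}\hat A$, where $P^{\tau}\to\mathbf 1\pi^{\top}$ and all limiting rows genuinely coincide; the cleanest repair is therefore to state the result for the random-walk normalization (or to add a regularity hypothesis), rather than to assert that both normalizations behave the same.
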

\begin{proof}
Given the message passing is, 
$$
\boldsymbol{X}^{(t+1)}=\boldsymbol{\hat{D}}^{-\frac{1}{2}} \boldsymbol{\hat{A}} \boldsymbol{\hat{D}}^{-\frac{1}{2}}\boldsymbol{X}^{(t)},
$$
where  $\boldsymbol{\hat{A}}=\boldsymbol{A}+\boldsymbol{I}$ and $\boldsymbol{\hat{D}}_i=\sum_{j}\boldsymbol{\hat{A}}_{ij}$. We can then rewrite the message passing as,
$$
\boldsymbol{X}^{(t+1)}=(\boldsymbol{I}-\boldsymbol{L}_{sym})\boldsymbol{X}^{(t)},
$$
where $\boldsymbol{L}_{sym}=\boldsymbol{\hat{D}}^{-\frac{1}{2}} \boldsymbol{\hat{L}} \boldsymbol{\hat{D}}^{-\frac{1}{2}}$, $\boldsymbol{\hat{L}}=\boldsymbol{\hat{D}}-\boldsymbol{\hat{A}}$.
Let $(\lambda_1, \dots, \lambda_N)$ and $(\boldsymbol{e}_1, \dots, \boldsymbol{e}_N)$ denote the eigenvalue and eigenvector of matrix $\boldsymbol{I}-\boldsymbol{L}_{sym}$, respectively. With the property of symmetric Laplacian matrix for non-bipartile and connected graph,
$$
-1 < \lambda_1 < \lambda_2 < \cdots < \lambda_N=1, \quad
\boldsymbol{e}_N=\boldsymbol{\hat{D}}^{-\frac{1}{2}}[1, 1, \dots, 1]^{T},
$$
we can rewrite $\boldsymbol{X}_i^{(\tau)}-\boldsymbol{X}_j^{(\tau)}$ as
$$
\begin{aligned}
\boldsymbol{X}_i^{(\tau)}-\boldsymbol{X}_j^{(\tau)}=[(\boldsymbol{I}-\boldsymbol{L}_{sym})^{\tau}\boldsymbol{X}]_i-[(\boldsymbol{I}-\boldsymbol{L}_{sym})^{\tau}\boldsymbol{X}]_j \\
= [\lambda_1^{\tau}(\boldsymbol{e}_{1}^{(i)}-\boldsymbol{e}_{1}^{(j)}),\dots, \lambda_{n-1}^{\tau}(\boldsymbol{e}_{N-1}^{(i)}-\boldsymbol{e}_{N-1}^{(j)}),0]\boldsymbol{\hat{X}}
\end{aligned}
$$
$\boldsymbol{e}_{k}^{(i)}$ is the $i^{th}$ element of eigenvector $\boldsymbol{e}_{k}$, $\boldsymbol{\hat{X}}$ is the coordinate matrix of $\boldsymbol{X}$ in the space spanned by eigenvectors $(\boldsymbol{e}_1, \dots, \boldsymbol{e}_N)$.
Thus,
$$
||\boldsymbol{X}_i^{(\tau)}-\boldsymbol{X}_j^{(\tau)}||_2= \sqrt{\sum_{m=1}^{N}[\sum_{k=1}^{N-1}\lambda_k^{\tau}(\boldsymbol{e}_{k}^{(i)}-\boldsymbol{e}_{k}^{(j)})\boldsymbol{\hat{X}}_{km}]^2}
$$
Because $1 < \lambda_1 < \lambda_2 < \cdots < \lambda_{N-1} < 1$, thus for a large $\tau$,  $||\boldsymbol{X}_i^{(\tau)}-\boldsymbol{X}_j^{(\tau)}||_2 \le ||\boldsymbol{X}_i^{(0)}-\boldsymbol{X}_j^{(0)}||_2$.
\end{proof}
\subsection{ProGCL}
We aim to estimate the probability of a negative being true one. As can be observed in Figure~\ref{fig_3}, there is a significant difference between the false negatives and true negatives' distributions in GCL, allowing the two types of negatives can be distinguished from the similarity distribution. Here we propose to utilize mixture model to estimate the probability. Gaussian Mixture Model (GMM) is the most popular mixture model~\cite{lindsay1995mixture}. However, in Figure~\ref{fig_3}, the distribution of false negatives is skew and thus symmetric Gaussian distribution can not fit this well. To circumvent this issue, we resort to beta distribution~\cite{gupta2004handbook,ji2005applications} which is flexible enough to model various distributions (symmetric, skewed, arched distributions and so on) over $[0,1]$. As can be observed in Figure~\ref{fig_3}, Beta Mixture Model (BMM) can fit the empirical distribution better than GMM. Also, we compare the performance of ProGCL with BMM and GMM in Table~\ref{table4} and find that BMM consistently outperforms GMM. The probability density function (pdf) of beta distribution is,
\begin{equation}
\label{eq4}
p(s \mid \alpha, \beta)=\frac{\Gamma(\alpha+\beta)}{\Gamma(\alpha) \Gamma(\beta)} s^{\alpha-1}(1-s)^{\beta-1},
\end{equation}
where $\alpha, \beta > 0$ are the parameters of beta distribution and  $\Gamma(\cdot)$ is gamma function. The pdf of beta mixture model of $C$ components on $s$ (Min-Max normalized cosine similarity between normalized embeddings of anchors and negatives) can be defined as:
\begin{equation}
\label{eq3}
p(s)=\sum_{c=1}^{C} \lambda_{c} p(s \mid \alpha_c, \beta_c),
\end{equation}
\begin{figure}[t]
    \subfigure[Amazon-Photo]{
    \label{fig3-a}
    \includegraphics[width=0.23\textwidth]{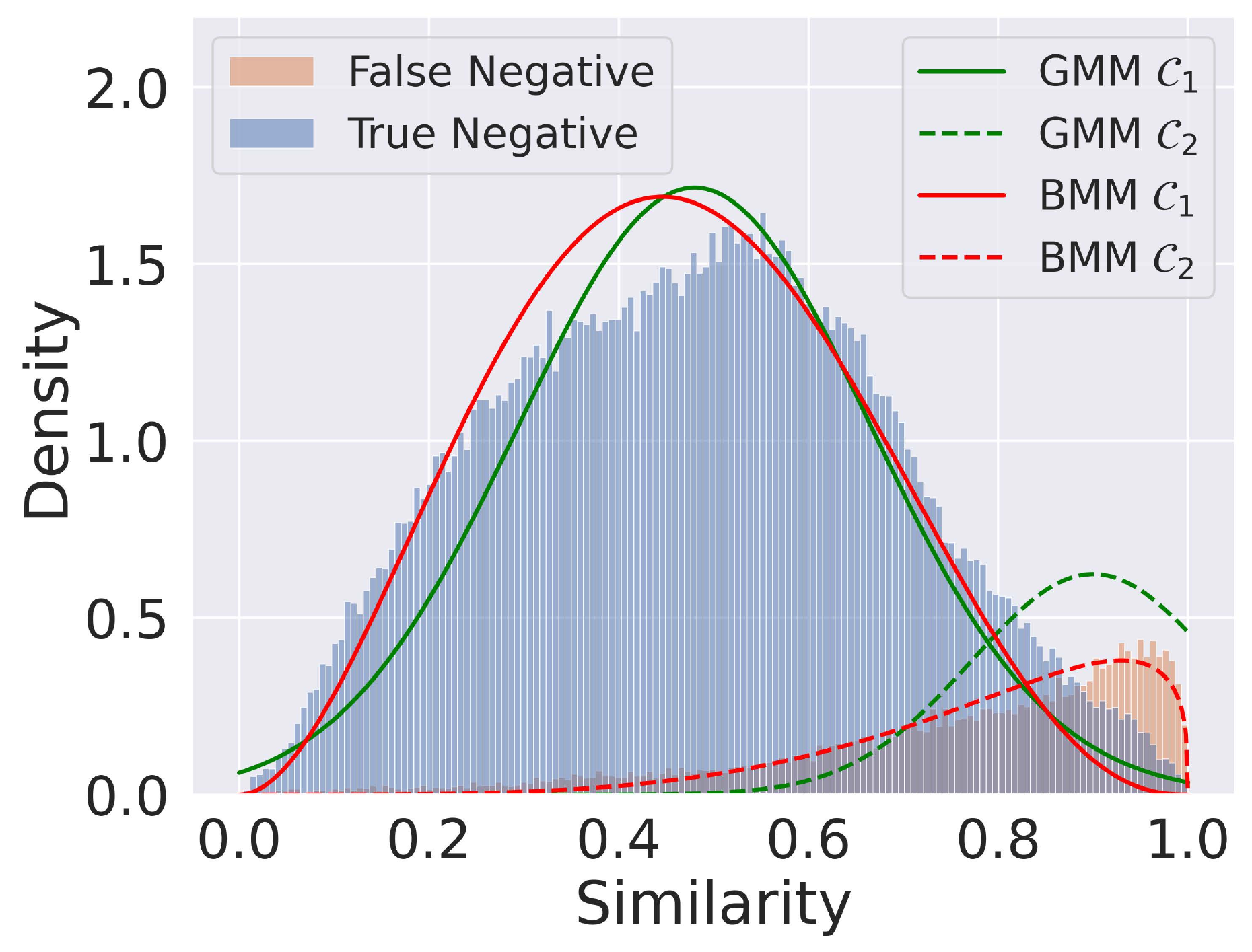}}
    \hspace{-3mm}
    \subfigure[Coauthor-CS]{
    \label{fig3-b}
    \includegraphics[width=0.23\textwidth]{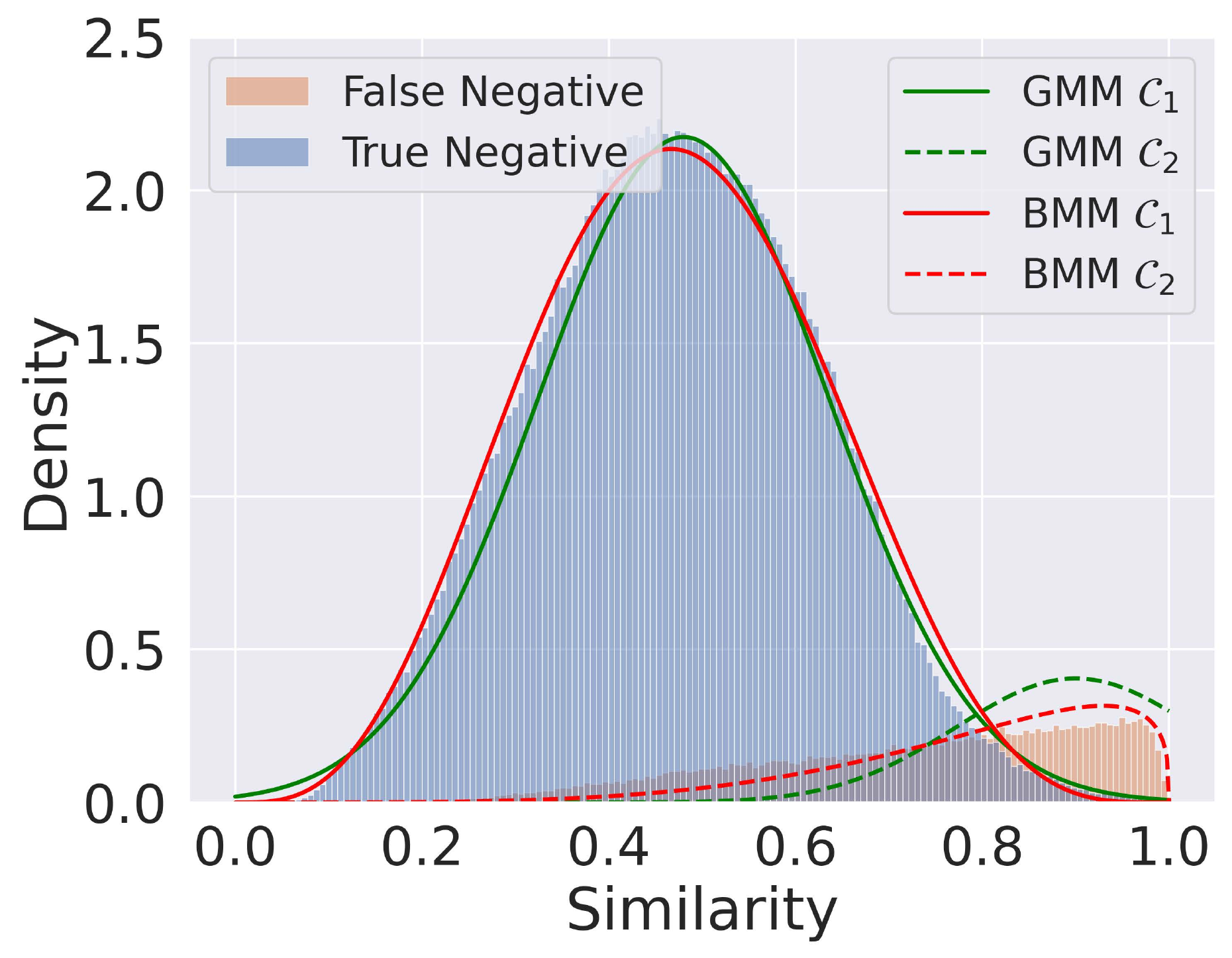}}
    \caption{Empirical distribution and estimated GMM and BMM on Amazon-Photo and Coauthor-CS datasets. The BMM fits the distributions better. Here $\mathcal{C}_1, \mathcal{C}_2$ denote the estimated distributions of two components respectively. Note that BMM is defined on the interval $\left[0,1\right]$ while GMM is defined over $\left[-\infty,+\infty\right]$.}
    \label{fig_3}
\end{figure}
where $\lambda_{c}$ are the mixture coefficients. Here we can fit a two-component BMM (i.e., $C=2$) to model the distribution of true and false negatives. We then utilize Expectation Maximization (EM) algorithm to fit BMM to the observed distribution. In E-step, we fix the parameters of BMM ($\lambda_{c}, \alpha_{c}, \beta_{c}$) and update $p(c\mid s)$ with Bayes rule,
\begin{equation}
p(c\mid s)=\frac{\lambda_{c} p\left(s \mid \alpha_{c}, \beta_{c}\right)}{\sum_{j=1}^{C} \lambda_{j} p\left(s \mid \alpha_{j}, \beta_{j}\right)}.
\end{equation}
In practice, fitting BMM with all the similarities will incur high computational cost. Instead, we can fit BMM well by only sampling $M$ ($M \ll N^2$) similarities for simplification. Larger $M$ bring limited benefits, which we will discuss in the appendix. We can then obtain the weighted average $\bar{s}_{c}$ and variance $v_{c}^{2}$ over $M$ similarities, 
\begin{equation}
\bar{s}_{c}=\frac{\sum_{i=1}^{M} p(c\mid s_{i}) s_{i}}{\sum_{i=1}^{M} p(c\mid s_{i})}, \quad v_{c}^{2}=\frac{\sum_{i=1}^{M} p(c\mid s_{i})\left(s_{i}-\bar{s}_{c}\right)^{2}}{\sum_{i=1}^{M} p(c\mid s_{i})}.
\end{equation}
For M-step, the component model parameters $\alpha_{c}, \beta_{c}$ can be estimated using the method of moments in statistics, 
\begin{equation}
\alpha_{c}=\bar{s}_{c}\left(\frac{\bar{s}_{c}\left(1-\bar{s}_{c}\right)}{v_{c}^{2}}-1\right), \quad \beta_{c}=\frac{\alpha_{c}\left(1-\bar{s}_{c}\right)}{\bar{s}_{c}},
\end{equation}
and coefficients $\lambda_{c}$ for mixing can be calculated as,
\begin{equation}
\lambda_{c}=\frac{1}{M} \sum_{i=1}^{M} p(c\mid s_{i}).
\end{equation}
The above E and M-steps are iterated until convergence or the maximum of iterations $I$ are reached. In our experiments, we set $I = 10$ and we study the influence of $I$ in the appendix. Finally, we can obtain the probability of a negative being true or negative one relative to the anchor with the their similarity $s$,
\begin{equation}
\label{eq11}
p\left(c \mid s\right)=\frac{\lambda_c p\left(s \mid \alpha_{c}, \beta_{c} \right)}{p\left(s\right)}.
\end{equation}
Note that we can regard the fitted distribution with larger $\lambda_c$ as true negatives’ distribution because there are more true negatives than false ones relative to each anchor in multi-class datasets. Also, we can regard the fitted distribution with smaller mean as true negatives’ distribution because the anchor and false negatives share the same class and they are more similar to each other in general. With the posterior probabilities, we devise two schemes to boost the performance of existing GCL as we elaborate below.
\subsection{Scheme 1: ProGCL-weight}
As revealed above, GCL suffers from severe sampling bias which will undermine the performance. To tackle this problem, we propose a novel measure for negatives' hardness considering the hardness and the probability of a negative being true one simultaneously,
\begin{equation}
\label{measure}
w(i,k) = \frac{p\left(c_t \mid s_{ik}\right)s_{ik}}{\frac{1}{N-1}\sum_{j\neq i}[p\left(c_t \mid s_{ij}\right)s_{ij}]},
\end{equation}
where $s_{ik}$ is the similarity between anchor $\boldsymbol{u}_{i}$ and its inter-view negative sample $\boldsymbol{v}_{k}$ and $p\left(c_t \mid s_{ij}\right)$ denotes the probability of $\boldsymbol{v}_{j}$ being true negative relative to anchor $\boldsymbol{u}_{i}$. Note $w(i,k)$ can be utilized to weight both inter-view $\left(\boldsymbol{u}_{i}, \boldsymbol{v}_{k}\right)$ and intra-view $\left(\boldsymbol{u}_{i}, \boldsymbol{u}_{k}\right)$ negative pair,
 \begin{equation}
\begin{aligned}
    &\ell_w\left(\boldsymbol{u}_{i}, \boldsymbol{v}_{i}\right)=\\
    &\log \frac{e^{\frac{\theta\left(\boldsymbol{u}_{i}, \boldsymbol{v}_{i}\right)}{ \tau}}}{\underbrace{e^{\frac{\theta\left(\boldsymbol{u}_{i}, \boldsymbol{v}_{i}\right)}{\tau}}}_{\text{positive pair }}+\underbrace{\sum_{k\neq i}w(i,k)e^{\frac{\theta\left(\boldsymbol{u}_{i}, \boldsymbol{v}_{k}\right)}{\tau}}}_{\text{inter-view negative pairs}}+\underbrace{\sum_{k\neq i}w(i,k)e^{\frac{\theta\left(\boldsymbol{u}_{i}, \boldsymbol{u}_{k}\right)} {\tau}}}_{\text{intra-view negative pairs}}},
\end{aligned}
\end{equation}
we can then define the new overall loss as the average of all the positive pairs,
\begin{equation}
\label{eq14}
\mathcal{J}_w=-\frac{1}{2N} \sum_{i=1}^{N}\left[\ell_w\left(\boldsymbol{u}_{i}, \boldsymbol{v}_{i}\right)+\ell_w\left(\boldsymbol{v}_{i}, \boldsymbol{u}_{i}\right)\right].
\end{equation}
\subsection{Scheme 2: ProGCL-mix}
\begin{figure}[t]
    \begin{center}
    \includegraphics[width=0.49\textwidth]{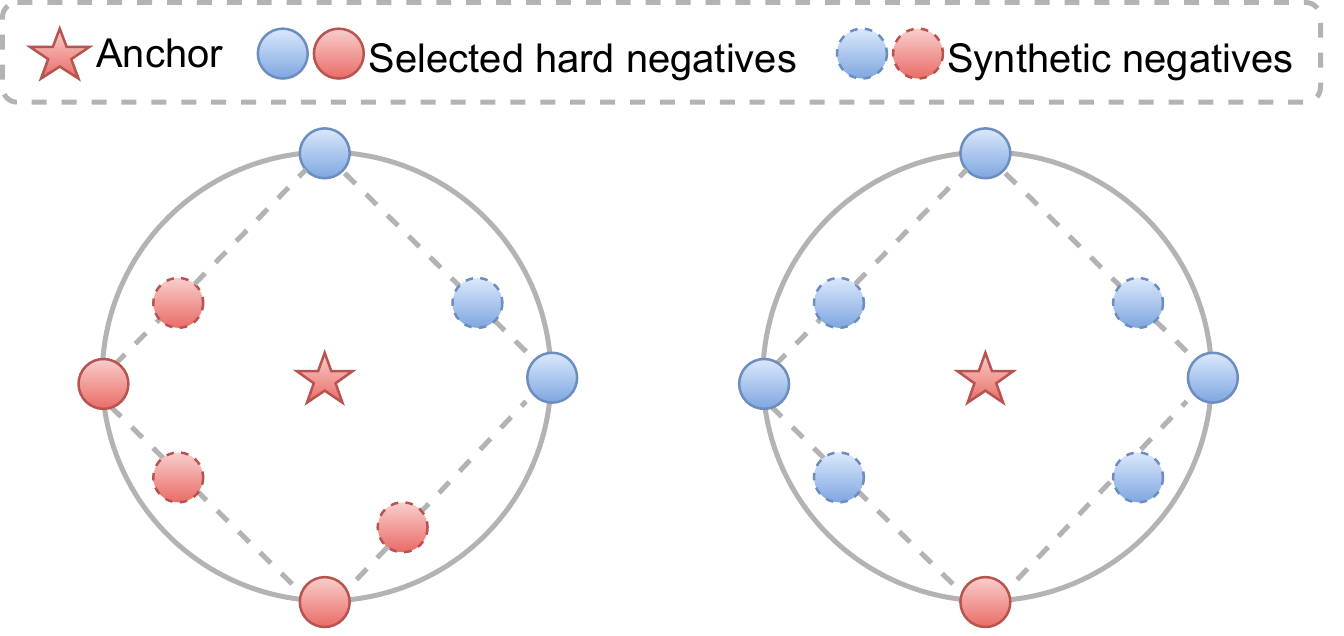}
    \end{center}
    \qquad\quad(a) MoCHi \qquad\qquad\qquad\quad(b) ProGCL-mix
    \caption{Comparison between MoCHi and ProGCL-mix. Gray dotted lines denote mixing and the classes are distinguished by the color of samples. ProGCL-mix synthesizes more true negatives.}
    \label{fig_4}
\end{figure}
Recently, MoCHi~\cite{kalantidis2020hard} proposes to synthesize more negatives with ``hard" negatives selected only with similarity. However, as analysed above, many synthesized hard negatives in GCL are positive samples indeed, which will undermine the performance. To remedy this deficiency, we propose ProGCL-mix which synthesizes more hard negatives considering the probability of a negative being true one. The comparison between MoCHi and ProGCL-mix can be seen in Figure~\ref{fig_4}. More specifically, for each anchor point $\boldsymbol{u}_{i}$, we synthesize $m$ hard negatives by convex linear combinations of pairs of its “hardest” existing negatives. Here, “hardest” existing negatives refers to $N^{\prime}$ negatives that are selected with the measure in Eq. (\ref{measure}). Instead of mixing $N^{\prime}$ samples randomly, we mix them by emphasizing samples that are more likely to be true negative. Formally, for each anchor $\boldsymbol{u}_i$, a synthetic point $\tilde{\boldsymbol{u}}_k$ ($k\in[1,m]$) would be given by,
\begin{small}
\begin{equation}
\label{eq15}
\tilde{\boldsymbol{u}}_k=\alpha_{k} \boldsymbol{v}_{p}+\left(1-\alpha_{k}\right) \boldsymbol{v}_{q},
\end{equation}
where $\boldsymbol{v}_{p},\boldsymbol{v}_{q}$ are selected from $N^{\prime}$ “hardest” existing negatives measured by Eq (\ref{measure}) and $\alpha_{k}$ can be calculated as, 
\begin{equation}
\alpha_{k} = \frac{p\left(c_t \mid s_{ip}\right)}{p\left(c_t \mid s_{ip}\right) + p\left(c_t \mid s_{iq}\right)},
\end{equation}
we can then define the training objective for each positive pair $\left(\boldsymbol{u}_{i}, \boldsymbol{v}_{i}\right)$ with the synthetic negatives,
\begin{equation}
\begin{aligned}
    &\ell_m\left(\boldsymbol{u}_{i},\boldsymbol{v}_{i}\right)=\\
    &\log \frac{e^{\frac{\theta\left(\boldsymbol{u}_{i}, \boldsymbol{v}_{i}\right)}{ \tau}}}{\underbrace{e^{\frac{\theta\left(\boldsymbol{u}_{i}, \boldsymbol{v}_{i}\right)}{\tau}}}_{\text{positive pair }}\!+\!\underbrace{\sum_{k\neq i}e^{\frac{\theta\left(\boldsymbol{u}_{i}, \boldsymbol{v}_{k}\right)}{\tau}}}_{\text{inter-view negative pairs}}\!+\!\underbrace{\sum_{k\neq i}e^{\frac{\theta\left(\boldsymbol{u}_{i}, \boldsymbol{u}_{k}\right)}{\tau}}}_{\text{intra-view negative pairs}}\!+\!\underbrace{\sum_{k=1}^{m}e^{\frac{\theta\left(\boldsymbol{u}_{i}, \tilde{\boldsymbol{u}}_{k}\right)}{\tau}}}_{\text{synthetic negative pairs}}}.
\end{aligned}
\end{equation}
\end{small}

Note that we synthesize new samples only with inter-view hard negatives. Finally, we can define the new overall loss,
\begin{equation}
\label{eq18}
\mathcal{J}_m=-\frac{1}{2N} \sum_{i=1}^{N}\left[\ell_m\left(\boldsymbol{u}_{i}, \boldsymbol{v}_{i}\right)+\ell_m\left(\boldsymbol{v}_{i}, \boldsymbol{u}_{i}\right)\right].
\end{equation}
\subsection{Time Complexity Analysis}
\label{remark}
It is noteworthy that the estimation of the posterior probabilities introduces light computational overhead over the base model. Firstly, we only have to fit BMM once during the training process instead of once per epoch. Secondly, we can fit BMM well with $M$ ($M \ll N^2$) similarities and the time complexity of EM algorithm for fitting in ProGCL is $\mathcal{O}(IM)$. $I$ is the the maximum of iterations. Thirdly, we only have to fit BMM with similarities from single view because both inter-view and intra-view include all negative pairs. In our experiments, we only utilize similarities from inter-view to fit BMM. The training algorithms of both ProGCL-weight and ProGCL-mix for transductive tasks are summarized in Algorithm~\ref{alg1}. The algorithm for inductive tasks can be found in the appendix. 
\begin{algorithm}[ht]
  \caption{ProGCL-weight $\And$ -mix (Transductive)}
  \label{alg1}
\begin{algorithmic}
  \STATE {\bfseries Input:} $\mathcal{T}, \mathcal{G}, f, g, N,$ normalized cosine similarity $s$, epoch for fitting BMM $E$, $mode$ (`weight' or `mix'). 
  \FOR{$epoch = 0,1,2,...$}
      \STATE Draw two augmentation functions $t\sim\mathcal{T}$, $t^{\prime}\sim\mathcal{T}$
      \STATE $\widetilde{\mathcal{G}}_{1}=t(\mathcal{G})$, $\widetilde{\mathcal{G}}_{2}=t^{\prime}(\mathcal{G})$;
      \STATE  $\mathcal{U} = f(\widetilde{\mathcal{G}}_{1})$, $\mathcal{V} = f(\widetilde{\mathcal{G}}_{2})$;
      \FOR{$\boldsymbol{u}_{i}\in\mathcal{U}$ and $\boldsymbol{v}_{i}\in\mathcal{V}$} 
          \STATE$s_{ij}= s(g(\boldsymbol{u}_{i}), g(\boldsymbol{v}_{i}))$
          \IF{$epoch = E$} 
              \STATE Compute $p\left(c_t \mid s_{ij}\right)$ with Eq.~(\ref{eq3}) to Eq.~(\ref{eq11}).
          \ENDIF
    \ENDFOR
    \IF{$epoch \geq E$}
          \IF{$mode$ = `weight'} 
              \STATE Compute $\mathcal{J}_w$ with Eq.~(\ref{measure}) to Eq.~(\ref{eq14}).
              \STATE Update the parameters of $f,g$ with $\mathcal{J}_w$; 
          \ENDIF
            \IF{$mode$ = `mix'}
              \STATE Compute $\mathcal{J}_m$ with Eq.~(\ref{eq15}) to Eq.~(\ref{eq18}).
              \STATE Update the parameters of $f,g$ with $\mathcal{J}_m$.
          \ENDIF
        \ELSE 
            \STATE Compute $\mathcal{J}$ with Eq.~(\ref{eq1}) to Eq.~(\ref{eq2}).
          \STATE Update the parameters of $f,g$ with $\mathcal{J}$.
        \ENDIF
  \ENDFOR
  \STATE {\bfseries Output:} $f,g$.
\end{algorithmic}
\end{algorithm}

\section{Experiments}

\subsection{Experimental Setup $\And$ Baselines}
Following previous work~\cite{velickovic2019deep}, we train the model in an unsupervised manner. The resulting embeddings are utilized to train and test a simple $\ell_2$-regularized logistic classifier. We train the classifier for 20 runs. Additionally, we adopt GRACE as base model and measure performance using micro-averaged F1-score on inductive tasks. For transductive tasks, we adopt GCA as base model and report the test accuracy.\\
\textbf{Transductive learning.} We adopt a two-layer GCN~\cite{kipf2016semi} as the encoder for transductive learning following previous works~\cite{velickovic2019deep,zhu2020deep}. 
We consider our ProGCL with multiple baselines including DeepWalk~\cite{perozzi2014deepwalk} and node2vec~\cite{grover2016node2vec}. Additionally, we also consider recent methods including Graph AutoEncoders (GAE, VGAE)~\cite{kipf2016variational}, DGI, GMI, MVGRL, MERIT, BGRL and GCA as introduced in the related work. We report the best performance of three variants of GCA. We also compare ProGCL with supervised counterparts including GCN~\cite{kipf2016semi} and Graph Attention Networks (GAT)~\cite{velivckovic2017graph}.\\
\textbf{Inductive learning on large graphs.}
Considering the large scale of some graph datasets, we adopt a three-layer GraphSAGE-GCN with residual connections as the encoder following DGI. We adopt the sub-sampling strategy in GraphSAGE where we first select a mini-batch of nodes and then a subgraph centered around each selected node is obtained by sampling node neighbors with replacement. More specifically, we sample 10, 10 and 25 neighbors at the first, second and third level respectively as DGI. The batchsize of our experiments is 256. Also, we estimate the posterior with pairwise similarities among each mini-batch instead of total training set, which we elaborate on in the appendix. We set traditional methods DeepWalk and deep learning based methods unsupervised GraphSAGE, DGI, a recent block-contrastive method COLES~\cite{zhu2021contrastive} and GMI as baselines. To compare ProGCL with supervised counterparts, we report the performance of two supervised methods FastGCN~\cite{chen2018fastgcn} and GraphSAGE. More details can be seen in the appendix.\\
\begin{table*}[t]
\caption{Summary of the accuracies ($\pm$ std) on transductive node classification. The `Available Data' refers to data we can obtain for training, where $\boldsymbol{X}, \boldsymbol{A}$ and $\boldsymbol{Y}$ denotes feature matrix, adjacency matrix and label matrix respectively. `\textbf{OOM}': out of memory on a 32GB GPU. We highlight the performance of ProGCL with gray background. The highest performance of unsupervised models is highlighted in boldface; the highest performance of supervised models is underlined. The baselines marked with '*' are reproduced with the same experimental settings (20 random dataset splits and model initializations). The other results are taken from previously published reports.}
\label{table2}
\setlength{\tabcolsep}{3.6pt}
\centering
\fontsize{8.8pt}{\baselineskip}\selectfont
\begin{tabular}{@{}cccccc@{}}
\toprule
Method  & Available Data & Amazon-Photo & Amazon-Computers &             Coauthor-CS &   Wiki-CS \\ \midrule
Raw features        &  $\boldsymbol{X}$             & 78.53 $\pm$ 0.00             &73.81 $\pm$ 0.00                  &  90.37 $\pm$ 0.00           & 71.98 $\pm$ 0.00        \\
node2vec           &  $\boldsymbol{A}$             &  89.67 $\pm$ 0.12            & 84.39 $\pm$ 0.08                  & 85.08 $\pm$ 0.03            & 71.79 $\pm$ 0.05        \\
DeepWalk           &  $\boldsymbol{A}$             & 89.44 $\pm$ 0.11             & 85.68 $\pm$ 0.06                 & 84.61 $\pm$ 0.22            & 74.35 $\pm$ 0.06        \\
DeepWalk + features &  $\boldsymbol{X},\boldsymbol{A}$              & 90.05 $\pm$ 0.08             & 86.28 $\pm$ 0.07                  & 87.70 $\pm$ 0.04           & 77.21 $\pm$ 0.03        \\ \midrule
GAE                & $\boldsymbol{X},\boldsymbol{A}$              & 91.62 $\pm$ 0.13             & 85.27 $\pm$ 0.19                 & 90.01 $\pm$ 0.17            & 70.15 $\pm$ 0.01        \\
VGAE               &  $\boldsymbol{X},\boldsymbol{A}$              & 92.20 $\pm$ 0.11             & 86.37 $\pm$ 0.21                 &92.11 $\pm$ 0.09             &75.35 $\pm$ 0.14         \\
DGI               &  $\boldsymbol{X},\boldsymbol{A}$              & 91.61 $\pm$ 0.22             & 83.95 $\pm$ 0.47                 & 92.15 $\pm$ 0.63            & 75.35 $\pm$ 0.14        \\
GMI               &  $\boldsymbol{X},\boldsymbol{A}$              & 90.68 $\pm$ 0.17             & 82.21 $\pm$ 0.31                 & \textbf{OOM}            & 74.85 $\pm$ 0.08        \\
MVGRL$^{*}$             &  $\boldsymbol{X},\boldsymbol{A}$              & 92.08 $\pm$ 0.01           & 87.45 $\pm$ 0.21                 & 92.18 $\pm$ 0.15             & 77.43 $\pm$ 0.17        \\
BGRL$^{*}$               &  $\boldsymbol{X},\boldsymbol{A}$              & 92.95 $\pm$ 0.07             &  87.89 $\pm$ 0.10                 & 92.72 $\pm$ 0.03           & 78.41 $\pm$ 0.09       \\ 
MERIT$^{*}$&  $\boldsymbol{X},\boldsymbol{A}$ & 92.53 $\pm$ 0.15 & 88.01 $\pm$ 0.12 &92.51 $\pm$ 0.14 & 78.35 $\pm$ 0.05    \\
GCA$^{*}$                &  $\boldsymbol{X},\boldsymbol{A}$              & 92.55 $\pm$ 0.03            & 87.82 $\pm$ 0.11                & 92.40 $\pm$ 0.07           & 78.26 $\pm$ 0.06       \\ 

\rowcolor{light-gray} 
\textbf{ProGCL-weight}      &  $\boldsymbol{X},\boldsymbol{A}$              &  93.30 $\pm$ 0.09            & 89.28 $\pm$ 0.15                &  93.51 $\pm$ 0.06          &  \textbf{78.68 $\pm$ 0.12}       \\ \rowcolor{light-gray}
\textbf{ProGCL-mix}         &  $\boldsymbol{X},\boldsymbol{A}$              & \textbf{93.64 $\pm$ 0.13}             &  \textbf{89.55 $\pm$ 0.16}               &  \textbf{93.67 $\pm$ 0.12}           & 78.45 $\pm$ 0.04        \\
\midrule\midrule
Supervised GCN                &  $\boldsymbol{X},\boldsymbol{A},\boldsymbol{Y}$             & 92.42 $\pm$ 0.22             & 86.51 $\pm$ 0.54                 & \underline{93.03 $\pm$ 0.31}          & 77.19 $\pm$ 0.12         \\
Supervised GAT                &  $\boldsymbol{X},\boldsymbol{A},\boldsymbol{Y}$             & \underline{92.56 $\pm$ 0.35}             & \underline{86.93 $\pm$ 0.29}                 & 92.31 $\pm$ 0.24            & \underline{77.65 $\pm$ 0.11}        \\ \bottomrule
\end{tabular}
\end{table*}
\subsection{Datasets}
We conduct experiments on seven widely-used datasets including Amazon-Photo, Amazon-Computers, Wiki-CS, Coauthor-CS, Reddit, Flickr and ogbn-arXiv. To keep fair, \emph{for transductive tasks}, we split Amazon-Photo, Amazon-Computers, Wiki-CS and Coauthor-CS for the training, validation and testing following~\cite{zhu2021graph}. \emph{For inductive task}, we split Reddit and Flickr following~\cite{velickovic2019deep,zeng2019graphsaint}. The experimental setting of ogbn-arXiv is the same as BGRL~\cite{thakoor2021bootstrapped}. More information of the datasets is in the appendix.
\begin{table}[t]
\caption{Summary of the micro-averaged $F_1$ scores ($\pm$ std) on inductive node classification.}
\label{table3}
\setlength{\tabcolsep}{3pt}
\centering
\fontsize{8.8pt}{\baselineskip}\selectfont
\begin{tabular}{@{}cccc@{}}
\toprule
Method              & Available Data & Flickr & Reddit \\ \midrule
Raw features        & $\boldsymbol{X}$   & 20.3  & 58.5            \\
DeepWalk            & $\boldsymbol{A}$   & 27.9  & 32.4            \\ \midrule
GraphSAGE       & $\boldsymbol{X},\boldsymbol{A}$     & 36.5           & 90.8            \\
DGI               &  $\boldsymbol{X},\boldsymbol{A}$    & 42.9$\pm$0.1           & 94.0$\pm$0.1            \\

GMI &  $\boldsymbol{X},\boldsymbol{A}$           & 44.5$\pm$0.2   & 94.8$\pm$0.0              \\
COLES-S$^{2}$GC &  $\boldsymbol{X},\boldsymbol{A}$           & 46.8$\pm$0.5   & 95.2$\pm$0.3              \\
GRACE              & $\boldsymbol{X},\boldsymbol{A}$      & 48.0$\pm$0.1        & 94.2$\pm$0.0              \\
\rowcolor{light-gray}
\textbf{ProGCL-weight}      & $\boldsymbol{X},\boldsymbol{A}$     & 49.2$\pm$0.6  & 95.1$\pm$0.2              \\
\rowcolor{light-gray}
\textbf{ProGCL-mix}         & $\boldsymbol{X},\boldsymbol{A}$   &  \textbf{50.0$\pm$0.3}            & \textbf{95.6$\pm$0.1}              \\ \midrule\midrule
Supervised FastGCN             & $\boldsymbol{X},\boldsymbol{A},\boldsymbol{Y}$ &48.1$\pm$0.5  & 89.5$\pm$1.2            \\
Supervised GraphSAGE           & $\boldsymbol{X},\boldsymbol{A},\boldsymbol{Y}$   & \underline{50.1$\pm$1.3} & \underline{92.1$\pm$1.1}            \\ \bottomrule
\end{tabular}
\vspace{-1.5em}
\end{table}
\subsection{Comparison with State-of-the-art Results}
For transductive classification, as can be observed in Table~\ref{table2}, ProGCL consistently performs better than previous unsupervised baselines or even the supervised baselines, which validates the superiority of our ProGCL. We provide more observations as following. Firstly, traditional methods node2vec and DeepWalk only using adjacency matrix outperform the simple logistic regression classifier that only uses raw features (``Raw features'') on Amazon datasets. However, the latter can perform better on Coauthor-CS and Wiki-CS. Combing the both (``DeepWalk + features'') can bring significant improvements. Compared with GCA, our ProGCL emphasize the hard negatives or remove sampling bias, which lifts the representation quality. Secondly, ProGCL-mix performs better than ProGCL-weight in general. For inductive tasks, ProGCL also achieves competitive performance over other baselines as shown in Table~\ref{table3}. 
\begin{table}[t]
\caption{Performance on the ogbn-arXiv measured in accuracy along with standard deviations. The results of baselines are taken from published reports. `$-$' means that the results are unavailable.}
\label{table_large}
\setlength{\tabcolsep}{3pt}
\centering
\fontsize{8.8pt}{\baselineskip}\selectfont
\begin{tabular}{lcc}
\toprule & Validation & Test \\
\hline MLP & 57.65 $\pm$ 0.12 & 55.50 $\pm$ 0.23 \\
node2vec & 71.29 $\pm$ 0.13 & 70.07 $\pm$ 0.13 \\
\hline Random-Init  & 69.90 $\pm$ 0.11 & 68.94 $\pm$ 0.15 \\
DGI  & 71.26 $\pm$ 0.11 & 70.34 $\pm$ 0.16 \\
GRACE-Subsampling  & 72.61 $\pm$ 0.15 & 71.51 $\pm$ 0.11 \\
BGRL  & 72.53 $\pm$ 0.09 & 71.64 $\pm$ 0.12 \\
COLES-S$^{2}$GC & $-$ & 72.48 $\pm$ 0.25\\
\rowcolor{light-gray}\textbf{ProGCL-weight} &72.45 $\pm$ 0.21 & 72.18 $\pm$ 0.09\\
\rowcolor{light-gray}\textbf{ProGCL-mix} &\textbf{72.82 $\pm$ 0.08} &\textbf{72.56 $\pm$ 0.20}\\
\hline
\hline Supervised GCN & 73.00 $\pm$ 0.17 & 71.74 $\pm$ 0.29 \\
\toprule
\end{tabular}
\vspace{-2pt}
\end{table}
\subsection{Results on Large-Scale OGB Dataset}
We conduct experiments on another large graph datasets ogbn-arxiv from OGB benchmark~\cite{hu2020open}. We adopt GRACE with sub-sampling as our base model and a 3-layer GCN as the encoder following~\citet{hu2020open}. As can observed in Table~\ref{table_large}, ProGCL outperforms other unsupervised baselines, which verifies that ProGCL can achieve good  tradeoff between performance and complexity. We report the results on both validation and test sets, as is convention for this task since the dataset is split based on a chronological ordering.
\subsection{Improving Various GCL Methods}
\begin{figure}[ht]
    \begin{center}
    \includegraphics[width=0.398\textwidth]{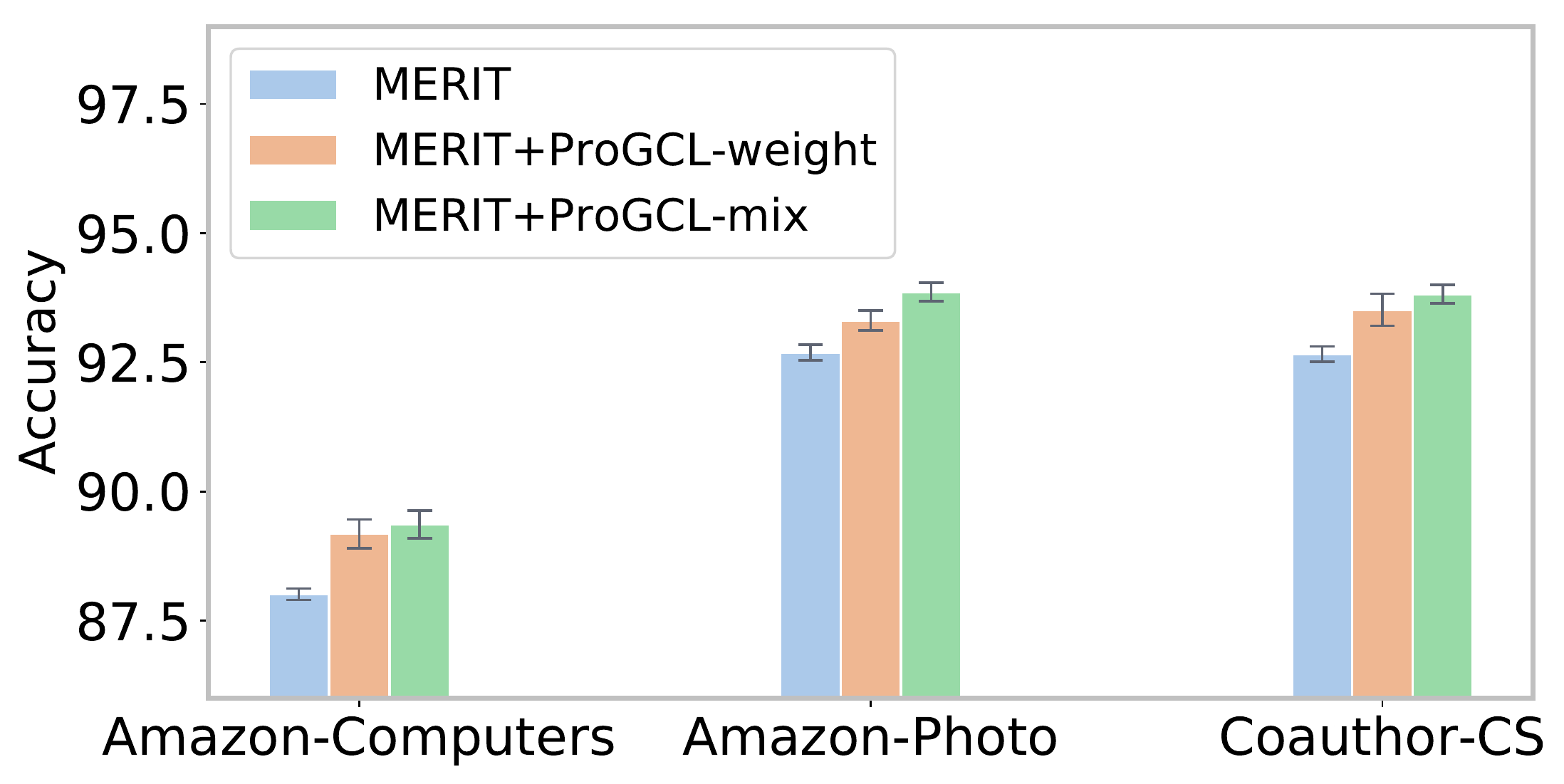}
    \end{center}
    \caption{The performance of ProGCL for another graph contrastive learning method MERIT~\cite{Jin2021MultiScaleCS}.}
    \label{fig_merit}
\end{figure}
In addition to GCA and GRACE, we also evaluate the performance of ProGCL on another GCL method MERIT. The results shown in Figure~\ref{fig_merit} demonstrate that ProGCL brings consistent improvements over the base method, which verifies that our ProGCL is readily pluggable into various negatives-based GCL methods to improve their performance.
\subsection{Why ProGCL Can Alleviate the Bias?}
\begin{figure}[ht]
    \subfigure[Coauthor-CS]{
    \includegraphics[width=0.23\textwidth]{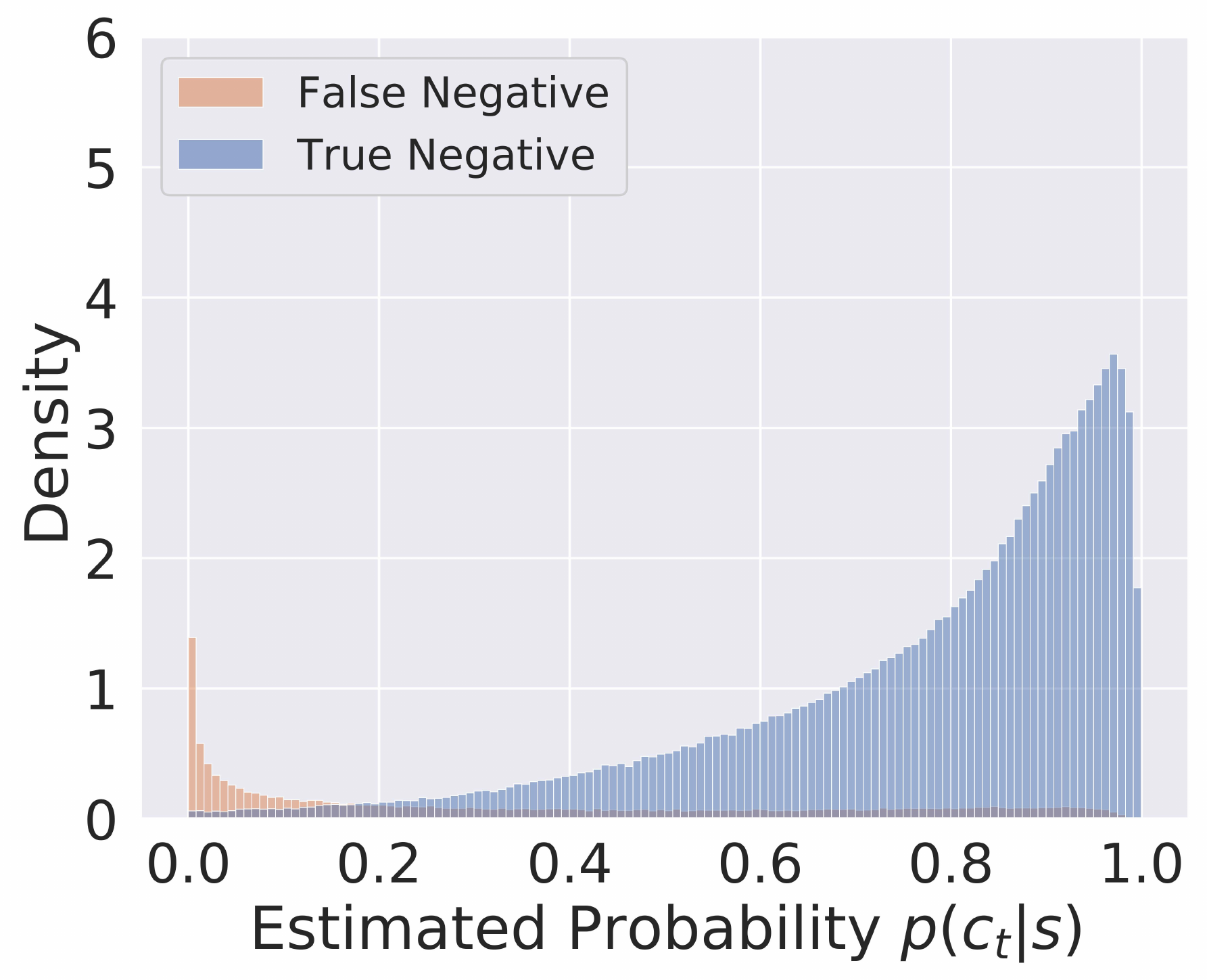}}
    \hspace{-3mm}
    \subfigure[Amazon-Photo]{
    \includegraphics[width=0.23\textwidth]{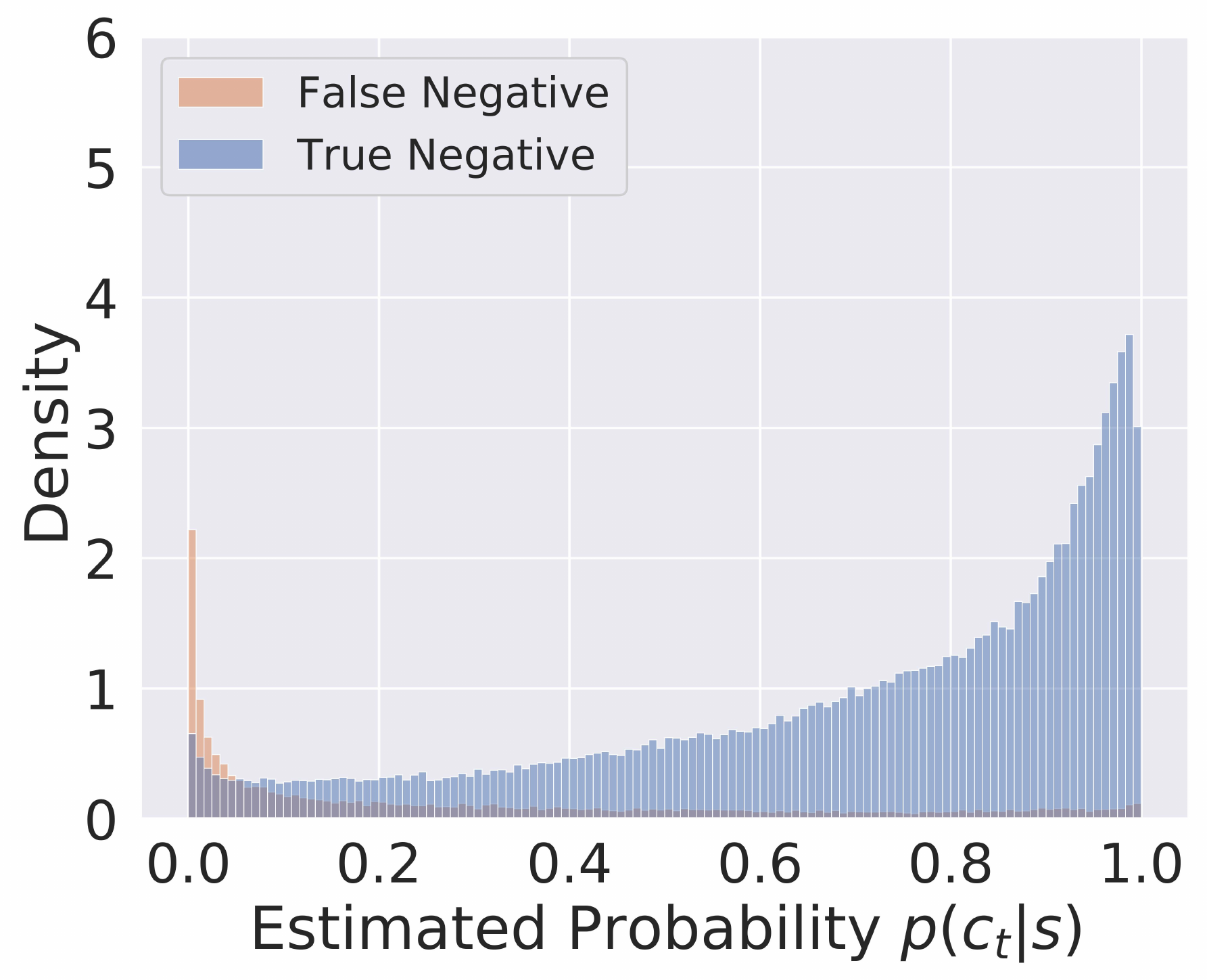}}
    \caption{The histograms of estimated probability.}
    \label{fig_estimation}
\end{figure}
To verify that our ProGCL can alleviate the sampling bias, we plot estimated probability histograms of negatives in Figure~\ref{fig_estimation}. Compared with similarity, the estimated probability serves as a more discriminative measure to distinguish true and false negatives, which can help us select hard and true negatives together with similarity as in Eq. (\ref{measure}).  
\subsection{Ablation Study}
\label{sec:ablation}
In this section, we replace or remove various parts of ProGCL to study the impact of each component.
\begin{table}[ht]
\caption{Comparison between BMM and GMM.}
\label{table4}
\centering
\resizebox{0.48\textwidth}{28.8pt}{
\begin{tabular}{@{}c|cc|cc|cc@{}}
\toprule
Datasets & \multicolumn{2}{c|}{Amazon-Photo} & \multicolumn{2}{c|}{Amazon-Computers} & \multicolumn{2}{c}{Coauthor-CS} \\ \midrule
Scheme   & weight            & mix           & weight              & mix             & weight           & mix          \\ \midrule
GMM      &  92.71                 & 92.83              &  88.35                   & 89.29                & 92.79                 & 92.79             \\
\textbf{BMM}      & \textbf{93.30}                  & \textbf{93.64}            & \textbf{89.28}                   & \textbf{89.55}                & \textbf{93.51}                 & \textbf{93.67}             \\ \bottomrule
\end{tabular}}
\end{table}\\
\textbf{BMM vs. GMM.}  We replace BMM in our ProGCL with GMM and report the performance in Table~\ref{table4}. BMM consistently outperforms GMM in both weight and mix schemes. The reason is that BMM can fit the negatives distribution better than GMM as shown in Figure~\ref{fig_3}.  
\begin{table}[ht]
\caption{Comparison between ProGCL and other hard nagative mining methods. ``$\textcolor{magenta}\uparrow$'' and ``$\textcolor{green}\downarrow$'' refer to performance improvement and drop compared with GCA respectively.}
\label{table5}
\centering
\resizebox{0.48\textwidth}{40pt}{
\begin{tabular}{@{}c|c|c|c@{}}
\toprule
Methods/Datasets  & Amazon-Photo   & Amazon-Computers & Coauthor-CS    \\ \midrule
\multicolumn{1}{l|}{GCA}        & 92.55          & 87.82            & 92.40          \\
+DCL              & 91.02 ($\textcolor{green}\downarrow$ 1.53)          & 86.58 ($\textcolor{green}\downarrow$ 1.24)           & 92.36 ($\textcolor{green}\downarrow$ 0.04) \\
+HCL              & 91.48 ($\textcolor{green}\downarrow$ 1.07)          & 87.21 ($\textcolor{green}\downarrow$ 0.61)           & 93.06 ($\textcolor{magenta}\uparrow$ 0.66) \\
\quad+MoCHi            & 92.36 ($\textcolor{green}\downarrow$ 0.19)          & 87.68 ($\textcolor{magenta}\uparrow$ 0.14)           & 92.58 ($\textcolor{magenta}\uparrow$ 0.18)    \\
+Ring              & 91.33 ($\textcolor{green}\downarrow$ 1.22)          & 84.18 ($\textcolor{green}\downarrow$ 3.64)           & 92.48 ($\textcolor{green}\downarrow$ 0.08) \\

\quad\textbf{+ProGCL-mix} & \textbf{93.64} ($\textcolor{magenta}\uparrow$ 1.09)  & \textbf{89.55} ($\textcolor{magenta}\uparrow$ 1.73)   & \textbf{93.67} ($\textcolor{magenta}\uparrow$ 1.27)\\ \bottomrule
\end{tabular}}
\end{table}\\
\textbf{ProGCL vs. Negative mining techniques.}  To study whether ProGCL can better utilize hard negatives and remove sampling bias in GCL, we equip GCA with DCL~\cite{chuang2020debiased}, HCL~\cite{robinson2021contrastive}, Ring~\cite{wu2021conditional} and MoCHi~\cite{kalantidis2020hard} which have achieved immense success in CL. As shown in Table~\ref{table5}, these techniques bring limited benefits over GCA. Instead, ProGCL introduces consistent and significant improvements over GCA, which further validates that our ProGCL is more suitable for GCL.
\begin{table}[ht]
\caption{Comparison between $p(c_t|s)$ and other alternatives.}
\label{table_prob}
\centering
\resizebox{0.48\textwidth}{32pt}{
\begin{tabular}{@{}c|cc|cc|cc@{}}
\toprule
Datasets & \multicolumn{2}{c|}{Amazon-Photo} & \multicolumn{2}{c|}{Amazon-Computers} & \multicolumn{2}{c}{Coauthor-CS} \\ \midrule
Alternatives   & weight            & mix           & weight              & mix             & weight           & mix          \\ \midrule
$p_r$      &  92.03                 & 92.58              &  87.35                   & 88.85               & 92.06                & 92.71             \\
$p_t$      & 92.49                 & 92.76          & 88.64                  & 89.15                & 92.45                 & 93.22            \\ 
$p(c_t|s)$      & \textbf{93.30}                  & \textbf{93.64}            & \textbf{89.28}                   & \textbf{89.55}                & \textbf{93.51}                 & \textbf{93.67}             \\ \bottomrule
\end{tabular}}
\end{table}\\
\textbf{Performance attributable to ProGCL.} We substitute the estimated probability $p(c_t|s)$ of ProGCL with random probability $p_r$ and tuned $p_t$ (1 - normalized similarity). As shown in Table~\ref{table_prob}, $p_t$ can bring minor improvement over random $p_r$ while the estimated posterior $p(c_t|s)$ by BMM is the best among the three alternatives.
\subsection{Hyperparameters Sensitivity Analysis}
\begin{figure}[ht]
    \begin{center}
    \includegraphics[width=0.326\textwidth]{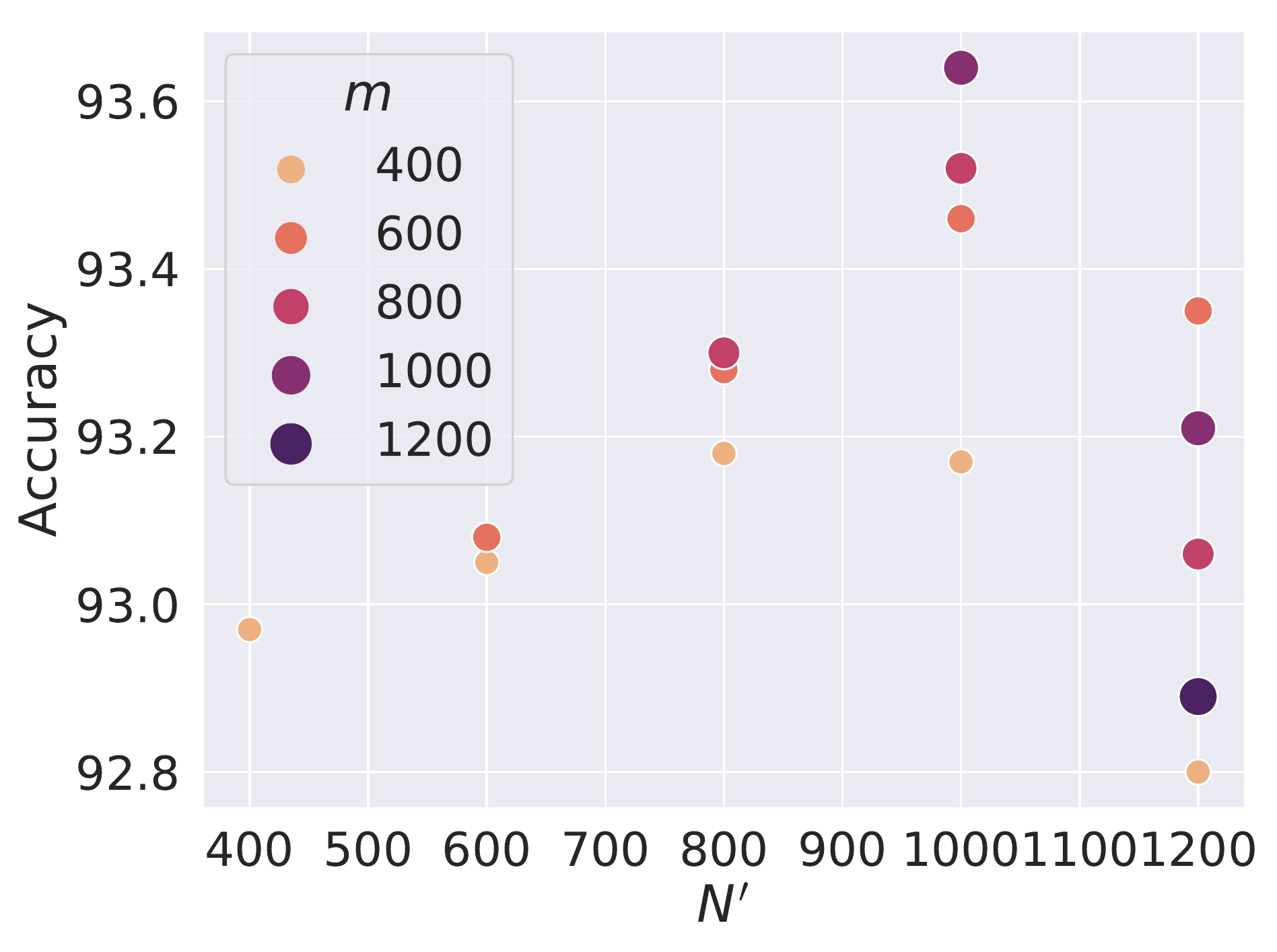}
    \setlength{\belowcaptionskip}{-2.58mm}
    \end{center}
    \caption{Accuracy when varying $N^{\prime}$ (x-axis) and $m$ on Amazon-Photo. We study more hyperparameters in the appendix.}
    \label{fig_7}
\end{figure}
Here we study the number of most hard negatives $N^{\prime}$ and synthetic negatives $m$ of ProGCL-mix. The results shown in Figure~\ref{fig_7} illustrate that more samples mixing bring consistent performance gains in general. However, oversized $N^{\prime}$ demonstrated no significant advantages in both accuracy and efficiency. 
\section{Conclusions}
In this paper, we explain why existing hard negative mining methods can not work well in GCL and contrapuntally introduce BMM to estimate the probability of a negative being true one. Also, we devise two schemes to further boost GCL. Interesting directions of future work include (1) applying GCL to more real-world tasks including social analysis and drug discovery~\cite{sun2021mocl,xia2022towards}; (2) exploring the theoretical explanations for the immense success of contrastive learning. 
\section{Acknowledgements}
This work is supported in part by the Science and Technology Innovation 2030 - Major Project (No. 2021ZD0150100) and National Natural Science Foundation of China (No. U21A20427).
\bibliography{example_paper}

\begin{thebibliography}{58}
\providecommand{\natexlab}[1]{#1}
\providecommand{\url}[1]{\texttt{#1}}
\expandafter\ifx\csname urlstyle\endcsname\relax
  \providecommand{\doi}[1]{doi: #1}\else
  \providecommand{\doi}{doi: \begingroup \urlstyle{rm}\Url}\fi

\bibitem[Chen et~al.(2018)Chen, Ma, and Xiao]{chen2018fastgcn}
Chen, J., Ma, T., and Xiao, C.
\newblock Fastgcn: fast learning with graph convolutional networks via
  importance sampling.
\newblock \emph{arXiv preprint arXiv:1801.10247}, 2018.

\bibitem[Chen et~al.(2020)Chen, Kornblith, Norouzi, and Hinton]{chen2020simple}
Chen, T., Kornblith, S., Norouzi, M., and Hinton, G.
\newblock A simple framework for contrastive learning of visual
  representations.
\newblock In \emph{International conference on machine learning}, pp.\
  1597--1607. PMLR, 2020.

\bibitem[Chu et~al.(2021)Chu, Wang, Shi, and Jiang]{ijcai2021-317}
Chu, G., Wang, X., Shi, C., and Jiang, X.
\newblock Cuco: Graph representation with curriculum contrastive learning.
\newblock In \emph{Proceedings of the Thirtieth International Joint Conference
  on Artificial Intelligence, {IJCAI-21}}, 2021.

\bibitem[Chuang et~al.(2020)Chuang, Robinson, Yen-Chen, Torralba, and
  Jegelka]{chuang2020debiased}
Chuang, C.-Y., Robinson, J., Yen-Chen, L., Torralba, A., and Jegelka, S.
\newblock Debiased contrastive learning.
\newblock \emph{arXiv preprint arXiv:2007.00224}, 2020.

\bibitem[Gao et~al.(2021)Gao, Yao, and Chen]{gao2021simcse}
Gao, T., Yao, X., and Chen, D.
\newblock Simcse: Simple contrastive learning of sentence embeddings.
\newblock \emph{arXiv preprint arXiv:2104.08821}, 2021.

\bibitem[Glorot \& Bengio(2010)Glorot and Bengio]{glorot2010understanding}
Glorot, X. and Bengio, Y.
\newblock Understanding the difficulty of training deep feedforward neural
  networks.
\newblock In \emph{Proceedings of the thirteenth international conference on
  artificial intelligence and statistics}, pp.\  249--256. JMLR Workshop and
  Conference Proceedings, 2010.

\bibitem[Grover \& Leskovec(2016)Grover and Leskovec]{grover2016node2vec}
Grover, A. and Leskovec, J.
\newblock node2vec: Scalable feature learning for networks.
\newblock In \emph{Proceedings of the 22nd ACM SIGKDD international conference
  on Knowledge discovery and data mining}, pp.\  855--864, 2016.

\bibitem[Gupta \& Nadarajah(2004)Gupta and Nadarajah]{gupta2004handbook}
Gupta, A.~K. and Nadarajah, S.
\newblock \emph{Handbook of beta distribution and its applications}.
\newblock CRC press, 2004.

\bibitem[Hamilton et~al.(2017)Hamilton, Ying, and
  Leskovec]{hamilton2017representation}
Hamilton, W.~L., Ying, R., and Leskovec, J.
\newblock Representation learning on graphs: Methods and applications.
\newblock \emph{arXiv preprint arXiv:1709.05584}, 2017.

\bibitem[Hassani \& Khasahmadi(2020)Hassani and
  Khasahmadi]{hassani2020contrastive}
Hassani, K. and Khasahmadi, A.~H.
\newblock Contrastive multi-view representation learning on graphs.
\newblock In \emph{International Conference on Machine Learning}, pp.\
  4116--4126. PMLR, 2020.

\bibitem[He et~al.(2016)He, Zhang, Ren, and Sun]{he2016deep}
He, K., Zhang, X., Ren, S., and Sun, J.
\newblock Deep residual learning for image recognition.
\newblock In \emph{Proceedings of the IEEE conference on computer vision and
  pattern recognition}, pp.\  770--778, 2016.

\bibitem[He et~al.(2020)He, Fan, Wu, Xie, and Girshick]{he2020momentum}
He, K., Fan, H., Wu, Y., Xie, S., and Girshick, R.
\newblock Momentum contrast for unsupervised visual representation learning.
\newblock In \emph{Proceedings of the IEEE/CVF Conference on Computer Vision
  and Pattern Recognition}, pp.\  9729--9738, 2020.

\bibitem[Hu et~al.(2020)Hu, Fey, Zitnik, Dong, Ren, Liu, Catasta, and
  Leskovec]{hu2020open}
Hu, W., Fey, M., Zitnik, M., Dong, Y., Ren, H., Liu, B., Catasta, M., and
  Leskovec, J.
\newblock Open graph benchmark: Datasets for machine learning on graphs.
\newblock \emph{arXiv preprint arXiv:2005.00687}, 2020.

\bibitem[Ji et~al.(2005)Ji, Wu, Liu, Wang, and Coombes]{ji2005applications}
Ji, Y., Wu, C., Liu, P., Wang, J., and Coombes, K.~R.
\newblock Applications of beta-mixture models in bioinformatics.
\newblock \emph{Bioinformatics}, 21\penalty0 (9):\penalty0 2118--2122, 2005.

\bibitem[Jin et~al.(2021)Jin, Zheng, Li, Gong, Zhou, and
  Pan]{Jin2021MultiScaleCS}
Jin, M., Zheng, Y., Li, Y.-F., Gong, C., Zhou, C., and Pan, S.
\newblock Multi-scale contrastive siamese networks for self-supervised graph
  representation learning.
\newblock In \emph{The 30th International Joint Conference on Artificial
  Intelligence (IJCAI)}, 2021.

\bibitem[Kalantidis et~al.(2020)Kalantidis, Sariyildiz, Pion, Weinzaepfel, and
  Larlus]{kalantidis2020hard}
Kalantidis, Y., Sariyildiz, M.~B., Pion, N., Weinzaepfel, P., and Larlus, D.
\newblock Hard negative mixing for contrastive learning.
\newblock In \emph{Neural Information Processing Systems (NeurIPS)}, 2020.

\bibitem[Kaya \& Bilge(2019)Kaya and Bilge]{kaya2019deep}
Kaya, M. and Bilge, H.~{\c{S}}.
\newblock Deep metric learning: A survey.
\newblock \emph{Symmetry}, 11\penalty0 (9):\penalty0 1066, 2019.

\bibitem[Kipf \& Welling(2016{\natexlab{a}})Kipf and Welling]{kipf2016semi}
Kipf, T.~N. and Welling, M.
\newblock Semi-supervised classification with graph convolutional networks.
\newblock \emph{arXiv preprint arXiv:1609.02907}, 2016{\natexlab{a}}.

\bibitem[Kipf \& Welling(2016{\natexlab{b}})Kipf and
  Welling]{kipf2016variational}
Kipf, T.~N. and Welling, M.
\newblock Variational graph auto-encoders.
\newblock \emph{arXiv preprint arXiv:1611.07308}, 2016{\natexlab{b}}.

\bibitem[Li et~al.(2021)Li, Wang, Zhang, Yuan, Li, and Zhu]{li2021disentangled}
Li, H., Wang, X., Zhang, Z., Yuan, Z., Li, H., and Zhu, W.
\newblock Disentangled contrastive learning on graphs.
\newblock \emph{Advances in Neural Information Processing Systems}, 34, 2021.

\bibitem[Lindsay(1995)]{lindsay1995mixture}
Lindsay, B.~G.
\newblock Mixture models: theory, geometry and applications.
\newblock In \emph{NSF-CBMS regional conference series in probability and
  statistics}, pp.\  i--163. JSTOR, 1995.

\bibitem[Mernyei \& Cangea(2020)Mernyei and Cangea]{mernyei2020wiki}
Mernyei, P. and Cangea, C.
\newblock Wiki-cs: A wikipedia-based benchmark for graph neural networks.
\newblock \emph{arXiv preprint arXiv:2007.02901}, 2020.

\bibitem[Mikolov et~al.(2013)Mikolov, Sutskever, Chen, Corrado, and
  Dean]{mikolov2013distributed}
Mikolov, T., Sutskever, I., Chen, K., Corrado, G.~S., and Dean, J.
\newblock Distributed representations of words and phrases and their
  compositionality.
\newblock \emph{Advances in neural information processing systems}, 26, 2013.

\bibitem[Oord et~al.(2018)Oord, Li, and Vinyals]{oord2018representation}
Oord, A. v.~d., Li, Y., and Vinyals, O.
\newblock Representation learning with contrastive predictive coding.
\newblock \emph{arXiv preprint arXiv:1807.03748}, 2018.

\bibitem[Peng et~al.(2020)Peng, Huang, Luo, Zheng, Rong, Xu, and
  Huang]{peng2020graph}
Peng, Z., Huang, W., Luo, M., Zheng, Q., Rong, Y., Xu, T., and Huang, J.
\newblock Graph representation learning via graphical mutual information
  maximization.
\newblock In \emph{Proceedings of The Web Conference 2020}, pp.\  259--270,
  2020.

\bibitem[Pennington et~al.(2014)Pennington, Socher, and
  Manning]{pennington2014glove}
Pennington, J., Socher, R., and Manning, C.~D.
\newblock Glove: Global vectors for word representation.
\newblock In \emph{Proceedings of the 2014 conference on empirical methods in
  natural language processing (EMNLP)}, pp.\  1532--1543, 2014.

\bibitem[Perozzi et~al.(2014)Perozzi, Al-Rfou, and Skiena]{perozzi2014deepwalk}
Perozzi, B., Al-Rfou, R., and Skiena, S.
\newblock Deepwalk: Online learning of social representations.
\newblock In \emph{Proceedings of the 20th ACM SIGKDD international conference
  on Knowledge discovery and data mining}, pp.\  701--710, 2014.

\bibitem[Qiu et~al.(2020)Qiu, Chen, Dong, Zhang, Yang, Ding, Wang, and
  Tang]{qiu2020gcc}
Qiu, J., Chen, Q., Dong, Y., Zhang, J., Yang, H., Ding, M., Wang, K., and Tang,
  J.
\newblock Gcc: Graph contrastive coding for graph neural network pre-training.
\newblock In \emph{Proceedings of the 26th ACM SIGKDD International Conference
  on Knowledge Discovery \& Data Mining}, pp.\  1150--1160, 2020.

\bibitem[Robinson et~al.(2021)Robinson, Chuang, Sra, and
  Jegelka]{robinson2021contrastive}
Robinson, J.~D., Chuang, C.-Y., Sra, S., and Jegelka, S.
\newblock Contrastive learning with hard negative samples.
\newblock In \emph{International Conference on Learning Representations}, 2021.
\newblock URL \url{https://openreview.net/forum?id=CR1XOQ0UTh-}.

\bibitem[Shchur et~al.(2018)Shchur, Mumme, Bojchevski, and
  G{\"u}nnemann]{shchur2018pitfalls}
Shchur, O., Mumme, M., Bojchevski, A., and G{\"u}nnemann, S.
\newblock Pitfalls of graph neural network evaluation.
\newblock \emph{arXiv preprint arXiv:1811.05868}, 2018.

\bibitem[Sinha et~al.(2015)Sinha, Shen, Song, Ma, Eide, Hsu, and
  Wang]{sinha2015overview}
Sinha, A., Shen, Z., Song, Y., Ma, H., Eide, D., Hsu, B.-J., and Wang, K.
\newblock An overview of microsoft academic service (mas) and applications.
\newblock In \emph{Proceedings of the 24th international conference on world
  wide web}, pp.\  243--246, 2015.

\bibitem[Sun et~al.(2020)Sun, Hoffman, Verma, and Tang]{Sun2020InfoGraph:}
Sun, F.-Y., Hoffman, J., Verma, V., and Tang, J.
\newblock Infograph: Unsupervised and semi-supervised graph-level
  representation learning via mutual information maximization.
\newblock In \emph{International Conference on Learning Representations}, 2020.
\newblock URL \url{https://openreview.net/forum?id=r1lfF2NYvH}.

\bibitem[Sun et~al.(2021)Sun, Xing, Wang, Chen, and Zhou]{sun2021mocl}
Sun, M., Xing, J., Wang, H., Chen, B., and Zhou, J.
\newblock Mocl: data-driven molecular fingerprint via knowledge-aware
  contrastive learning from molecular graph.
\newblock In \emph{Proceedings of the 27th ACM SIGKDD Conference on Knowledge
  Discovery \& Data Mining}, pp.\  3585--3594, 2021.

\bibitem[Suresh et~al.(2021)Suresh, Li, Hao, and
  Neville]{suresh2021adversarial}
Suresh, S., Li, P., Hao, C., and Neville, J.
\newblock Adversarial graph augmentation to improve graph contrastive learning.
\newblock \emph{NeurIPS}, 2021.

\bibitem[Tan et~al.(2021)Tan, Xia, Wu, and Li]{tan2021co}
Tan, C., Xia, J., Wu, L., and Li, S.~Z.
\newblock Co-learning: Learning from noisy labels with self-supervision.
\newblock In \emph{Proceedings of the 29th ACM International Conference on
  Multimedia}, pp.\  1405--1413, 2021.

\bibitem[Thakoor et~al.(2021)Thakoor, Tallec, Azar, Munos,
  Veli{\v{c}}kovi{\'c}, and Valko]{thakoor2021bootstrapped}
Thakoor, S., Tallec, C., Azar, M.~G., Munos, R., Veli{\v{c}}kovi{\'c}, P., and
  Valko, M.
\newblock Bootstrapped representation learning on graphs.
\newblock In \emph{ICLR 2021 Workshop on Geometrical and Topological
  Representation Learning}, 2021.
\newblock URL \url{https://openreview.net/forum?id=QrzVRAA49Ud}.

\bibitem[Tong et~al.(2021)Tong, Liang, Ding, Dai, Li, and
  Wang]{tong2021directed}
Tong, Z., Liang, Y., Ding, H., Dai, Y., Li, X., and Wang, C.
\newblock Directed graph contrastive learning.
\newblock \emph{Advances in Neural Information Processing Systems}, 34, 2021.

\bibitem[Veli{\v{c}}kovi{\'c} et~al.(2017)Veli{\v{c}}kovi{\'c}, Cucurull,
  Casanova, Romero, Lio, and Bengio]{velivckovic2017graph}
Veli{\v{c}}kovi{\'c}, P., Cucurull, G., Casanova, A., Romero, A., Lio, P., and
  Bengio, Y.
\newblock Graph attention networks.
\newblock \emph{arXiv preprint arXiv:1710.10903}, 2017.

\bibitem[Velickovic et~al.(2019)Velickovic, Fedus, Hamilton, Li{\`o}, Bengio,
  and Hjelm]{velickovic2019deep}
Velickovic, P., Fedus, W., Hamilton, W.~L., Li{\`o}, P., Bengio, Y., and Hjelm,
  R.~D.
\newblock Deep graph infomax.
\newblock In \emph{ICLR (Poster)}, 2019.

\bibitem[Wu et~al.(2021)Wu, Mosse, Zhuang, Yamins, and
  Goodman]{wu2021conditional}
Wu, M., Mosse, M., Zhuang, C., Yamins, D., and Goodman, N.
\newblock Conditional negative sampling for contrastive learning of visual
  representations.
\newblock In \emph{International Conference on Learning Representations}, 2021.
\newblock URL \url{https://openreview.net/forum?id=v8b3e5jN66j}.

\bibitem[Xia et~al.(2021)Xia, Lin, Xu, Wu, Gao, Li, and Li]{xia2021towards}
Xia, J., Lin, H., Xu, Y., Wu, L., Gao, Z., Li, S., and Li, S.~Z.
\newblock Towards robust graph neural networks against label noise, 2021.
\newblock URL \url{https://openreview.net/forum?id=H38f_9b90BO}.

\bibitem[Xia et~al.(2022{\natexlab{a}})Xia, Tan, Wu, Xu, and Li]{9747279}
Xia, J., Tan, C., Wu, L., Xu, Y., and Li, S.~Z.
\newblock Ot cleaner: Label correction as optimal transport.
\newblock In \emph{ICASSP 2022 - 2022 IEEE International Conference on
  Acoustics, Speech and Signal Processing (ICASSP)}, pp.\  3953--3957,
  2022{\natexlab{a}}.
\newblock \doi{10.1109/ICASSP43922.2022.9747279}.

\bibitem[Xia et~al.(2022{\natexlab{b}})Xia, Wu, Chen, Hu, and
  Li]{10.1145/3485447.3512156}
Xia, J., Wu, L., Chen, J., Hu, B., and Li, S.~Z.
\newblock Simgrace: A simple framework for graph contrastive learning without
  data augmentation.
\newblock In \emph{Proceedings of the ACM Web Conference 2022}, WWW '22, pp.\
  1070–1079, New York, NY, USA, 2022{\natexlab{b}}. Association for Computing
  Machinery.
\newblock ISBN 9781450390965.
\newblock \doi{10.1145/3485447.3512156}.
\newblock URL \url{https://doi.org/10.1145/3485447.3512156}.

\bibitem[Xia et~al.(2022{\natexlab{c}})Xia, Zheng, Tan, Wang, and
  Li]{xia2022towards}
Xia, J., Zheng, J., Tan, C., Wang, G., and Li, S.~Z.
\newblock Towards effective and generalizable fine-tuning for pre-trained
  molecular graph models.
\newblock \emph{bioRxiv}, 2022{\natexlab{c}}.

\bibitem[Xia et~al.(2022{\natexlab{d}})Xia, Zhu, Du, and Li]{xia2022survey}
Xia, J., Zhu, Y., Du, Y., and Li, S.~Z.
\newblock A survey of pretraining on graphs: Taxonomy, methods, and
  applications.
\newblock \emph{arXiv preprint arXiv:2202.07893}, 2022{\natexlab{d}}.

\bibitem[Xu et~al.(2021)Xu, Cheng, Luo, Chen, and Zhang]{xu2021infogcl}
Xu, D., Cheng, W., Luo, D., Chen, H., and Zhang, X.
\newblock Info{GCL}: Information-aware graph contrastive learning.
\newblock In Beygelzimer, A., Dauphin, Y., Liang, P., and Vaughan, J.~W.
  (eds.), \emph{Advances in Neural Information Processing Systems}, 2021.
\newblock URL \url{https://openreview.net/forum?id=519VBzfEaKW}.

\bibitem[You et~al.(2020)You, Chen, Sui, Chen, Wang, and Shen]{You2020GraphCL}
You, Y., Chen, T., Sui, Y., Chen, T., Wang, Z., and Shen, Y.
\newblock Graph contrastive learning with augmentations.
\newblock In Larochelle, H., Ranzato, M., Hadsell, R., Balcan, M.~F., and Lin,
  H. (eds.), \emph{Advances in Neural Information Processing Systems},
  volume~33, pp.\  5812--5823. Curran Associates, Inc., 2020.
\newblock URL
  \url{https://proceedings.neurips.cc/paper/2020/file/3fe230348e9a12c13120749e3f9fa4cd-Paper.pdf}.

\bibitem[You et~al.(2021)You, Chen, Shen, and Wang]{you2021graph}
You, Y., Chen, T., Shen, Y., and Wang, Z.
\newblock Graph contrastive learning automated.
\newblock \emph{arXiv preprint arXiv:2106.07594}, 2021.

\bibitem[You et~al.(2022)You, Chen, Wang, and Shen]{you2022bringing}
You, Y., Chen, T., Wang, Z., and Shen, Y.
\newblock Bringing your own view: Graph contrastive learning without
  prefabricated data augmentations, 2022.

\bibitem[Zeng et~al.(2019)Zeng, Zhou, Srivastava, Kannan, and
  Prasanna]{zeng2019graphsaint}
Zeng, H., Zhou, H., Srivastava, A., Kannan, R., and Prasanna, V.
\newblock Graphsaint: Graph sampling based inductive learning method.
\newblock \emph{arXiv preprint arXiv:1907.04931}, 2019.

\bibitem[Zhang et~al.(2021)Zhang, Wu, Yan, Wipf, and
  Philip]{zhang2021canonical}
Zhang, H., Wu, Q., Yan, J., Wipf, D., and Philip, S.~Y.
\newblock From canonical correlation analysis to self-supervised graph neural
  networks.
\newblock In \emph{Thirty-Fifth Conference on Neural Information Processing
  Systems}, 2021.

\bibitem[Zhao et~al.(2021)Zhao, Yang, Wang, Yang, and Deng]{zhao2021graph}
Zhao, H., Yang, X., Wang, Z., Yang, E., and Deng, C.
\newblock Graph debiased contrastive learning with joint representation
  clustering.
\newblock In \emph{Proc. IJCAI}, pp.\  3434--3440, 2021.

\bibitem[Zheng et~al.(2022)Zheng, Wang, Wang, Xia, Huang, Zhao, Zhang, and
  Li]{zheng-etal-2022-using}
Zheng, J., Wang, Y., Wang, G., Xia, J., Huang, Y., Zhao, G., Zhang, Y., and Li,
  S.
\newblock Using context-to-vector with graph retrofitting to improve word
  embeddings.
\newblock In \emph{Proceedings of the 60th Annual Meeting of the Association
  for Computational Linguistics (Volume 1: Long Papers)}, pp.\  8154--8163,
  Dublin, Ireland, May 2022. Association for Computational Linguistics.
\newblock URL \url{https://aclanthology.org/2022.acl-long.561}.

\bibitem[Zhu et~al.(2021{\natexlab{a}})Zhu, Sun, and
  Koniusz]{zhu2021contrastive}
Zhu, H., Sun, K., and Koniusz, P.
\newblock Contrastive laplacian eigenmaps.
\newblock \emph{Advances in Neural Information Processing Systems}, 34,
  2021{\natexlab{a}}.

\bibitem[Zhu et~al.(2020)Zhu, Xu, Yu, Liu, Wu, and Wang]{zhu2020deep}
Zhu, Y., Xu, Y., Yu, F., Liu, Q., Wu, S., and Wang, L.
\newblock Deep graph contrastive representation learning.
\newblock \emph{arXiv preprint arXiv:2006.04131}, 2020.

\bibitem[Zhu et~al.(2021{\natexlab{b}})Zhu, Xu, Liu, and Wu]{zhu2021an}
Zhu, Y., Xu, Y., Liu, Q., and Wu, S.
\newblock An empirical study of graph contrastive learning.
\newblock In \emph{Thirty-fifth Conference on Neural Information Processing
  Systems Datasets and Benchmarks Track (Round 2)}, 2021{\natexlab{b}}.
\newblock URL \url{https://openreview.net/forum?id=UuUbIYnHKO}.

\bibitem[Zhu et~al.(2021{\natexlab{c}})Zhu, Xu, Yu, Liu, Wu, and
  Wang]{zhu2021graph}
Zhu, Y., Xu, Y., Yu, F., Liu, Q., Wu, S., and Wang, L.
\newblock Graph contrastive learning with adaptive augmentation.
\newblock In \emph{Proceedings of the Web Conference 2021}, pp.\  2069--2080,
  2021{\natexlab{c}}.

\bibitem[Zhu et~al.(2022)Zhu, Xu, Cui, Yang, Liu, and Wu]{Zhu:2022se}
Zhu, Y., Xu, Y., Cui, H., Yang, C., Liu, Q., and Wu, S.
\newblock {Structure-Enhanced Heterogeneous Graph Contrastive Learning}.
\newblock In \emph{Proceedings of the 2022 SIAM International Conference on
  Data Mining}, pp.\  82--90. SIAM, 2022.
\newblock \doi{10.1137/1.9781611977172.10}.
\newblock URL \url{https://epubs.siam.org/doi/abs/10.1137/1.9781611977172.10}.

\end{thebibliography}
\bibliographystyle{icml2022}

\newpage
\appendix
\onecolumn
\icmltitle{Appendix of ProGCL}
\section{Datasets}
\label{Appendix_A}
\begin{table*}[ht]
\caption{Statistics of datasets used in experiments.}
\label{table7}
\centering
\resizebox{0.688\textwidth}{55.8pt}{
\begin{tabular}{@{}cccccc@{}}
\toprule
\textbf{Dataset} & \textbf{Task} & \textbf{Nodes} & \textbf{Edges}  & \textbf{Features} & \textbf{Classes}  \\ \midrule
\textbf{Amazon-Photo}     & Transductive & 7,650    & 119,081   & 745        & 8      \\
\textbf{Amazon-Computers} & Transductive & 13,752   & 245,861   & 767        & 10     \\
\textbf{Coauthor-CS}      & Transductive & 18,333   & 81,894    & 6,805       & 15   \\
\textbf{Wiki-CS}          & Transductive & 11,701   & 216,123   & 300        & 10    \\ \midrule
\textbf{Flickr}           & Inductive    & 89,250   & 899,756   & 500        & 7   
\\ 
\textbf{Reddit}           & Inductive    & 231,443  & 11,606,919 & 602        & 41  \\
\hline
\textbf{Ogbn-arXiv} & Inductive & 169,343 & 1,166,243 & 128 &40\\\bottomrule
\end{tabular}}
\end{table*}
We introduce the datasets used in our experiments as follows:
\begin{itemize}
\item \textbf{WikiCS}~\cite{mernyei2020wiki} is a reference network constructed based on Wikipedia. The nodes correspond to articles about computer
science and edges are hyperlinks between the articles. Nodes
are labeled with ten classes each representing a branch of
the field. Node features are calculated as the average of pre-trained GloVe~\cite{pennington2014glove} word embeddings of words in each article.
\item \textbf{Amazon-Computers} and \textbf{Amazon-Photo}~\cite{shchur2018pitfalls} are two networks of co-purchase relationships constructed from Amazon, where nodes are goods and two goods are connected
when they are frequently bought together. Each node has a
sparse bag-of-words feature encoding product reviews and
is labeled with its category.
\item \textbf{Coauthor-CS}~\cite{shchur2018pitfalls} is an academic networks, which contain co-authorship graphs based
on the Microsoft Academic Graph from the KDD Cup 2016
challenge. In the graph, nodes represent authors and
edges indicate co-authorship relationships; that is, two nodes
are connected if they have co-authored a paper. Each node
has a sparse bag-of-words feature based on paper keywords
of the author. The label of an author corresponds to their
most active research field.
\item \textbf{Flickr} originates from NUS-wide\footnote{https://lms.comp.nus.edu.sg/research/NUS-WIDE.htm}. The SNAP website\footnote{https://snap.stanford.edu/data/web-flickr.html} collected Flickr data from
four different sources including NUS-wide, and generated an un-directed graph. One node in the
graph represents one image uploaded to Flickr. If two images share some common properties (e.g.,
same geographic location, same gallery, comments by the same user, etc.), there is an edge between
the nodes of these two images. The node features are the 500-dimensional bag-of-word
representation of the images provided by NUS-wide. For labels, we scan over the 81 tags of each
image and manually merged them to 7 classes. Each image belongs to one of the 7 classes.
\item \textbf{Reddit}~\cite{hamilton2017representation} contains Reddit posts created
in September 2014, where posts belong to different communities (subreddit). In the dataset, nodes correspond to posts, and edges connect posts if the same user has commented on both. Node features are constructed from post title, content, and comments, using off-the-shelf GloVe word embeddings, along with other metrics such as post score and the number of comments.
\item \textbf{Ogbn-arXiv} is a large-scale graph from the Open Graph Benchmark~\cite{hu2020open}. This is another citation network, where nodes represent CS papers on arXiv indexed by the Microsoft Academic Graph~\cite{sinha2015overview}. In our experiments, we symmetrize this graph and thus there is an edge between any pair of nodes if one paper has cited the other. Papers are classified into 40 classes based on arXiv subject area. The node features are computed as the average word-embedding of all words in the paper, where the embeddings are computed using a skip-gram model~\cite{mikolov2013distributed} over the entire corpus.
\end{itemize}
\section{More Similarity Histograms of Negativess in SimCLR (CL) and GCA (GCL).}
\begin{figure}[ht]
\centering  
    \subfigure[Amazon-photo (Graph)]{
    \label{fig2-b}
    \includegraphics[width=0.288\textwidth]{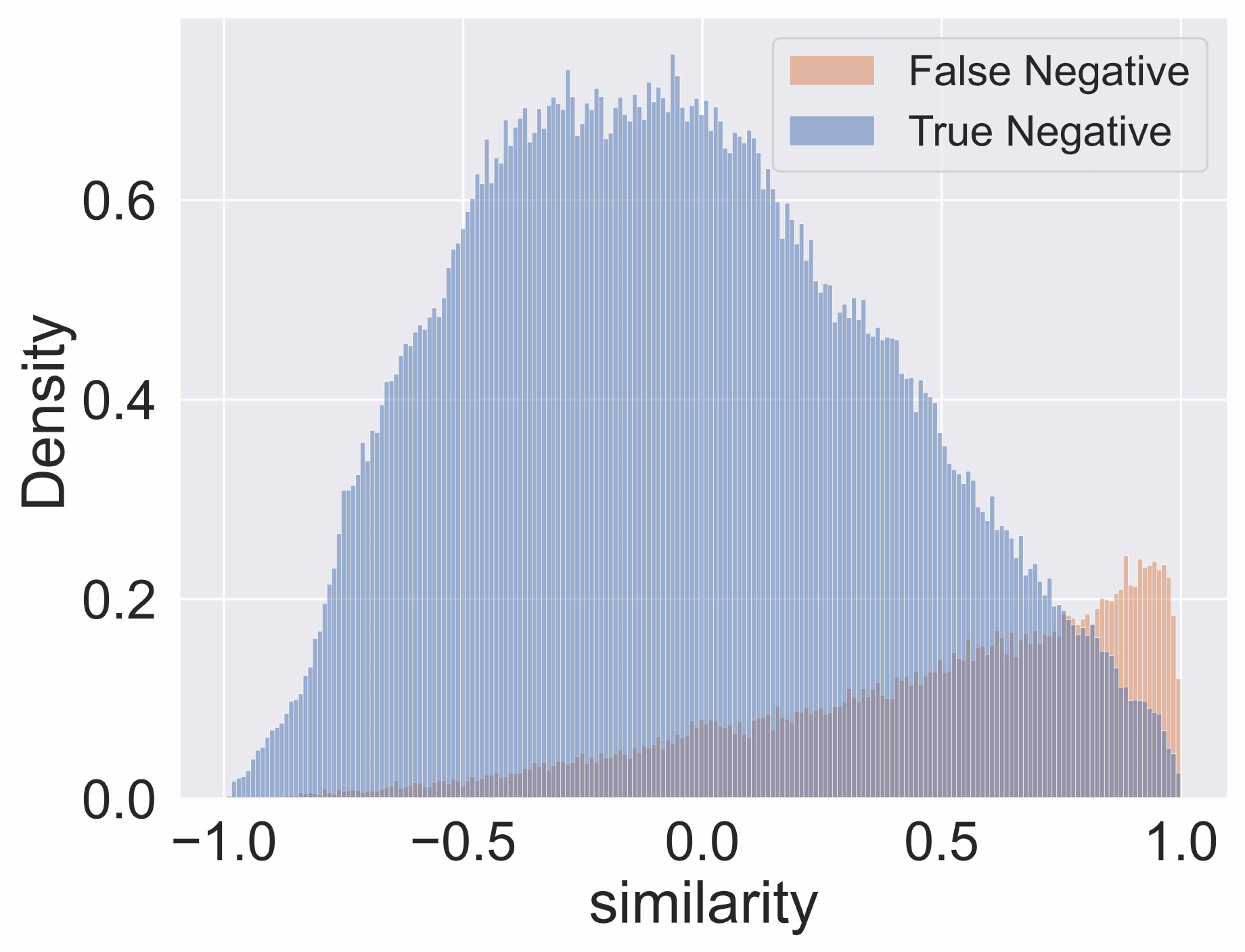}
    }
    \subfigure[Amazon-computers (Graph)]{
    \label{fig2-c}
    \includegraphics[width=0.288\textwidth]{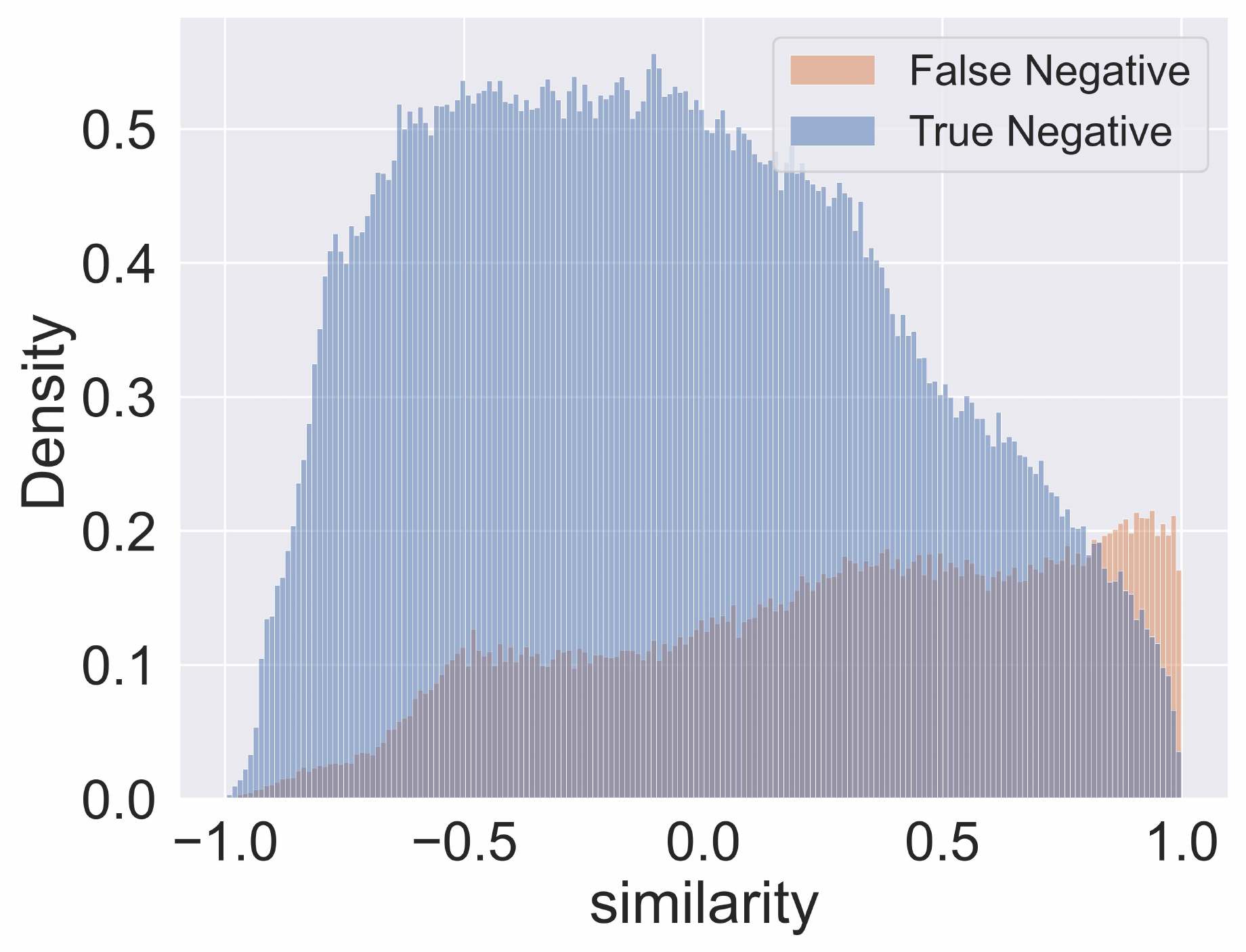}
    }
    \subfigure[WikiCS (Graph)]{
    \label{fig2-d}
    \includegraphics[width=0.288\textwidth]{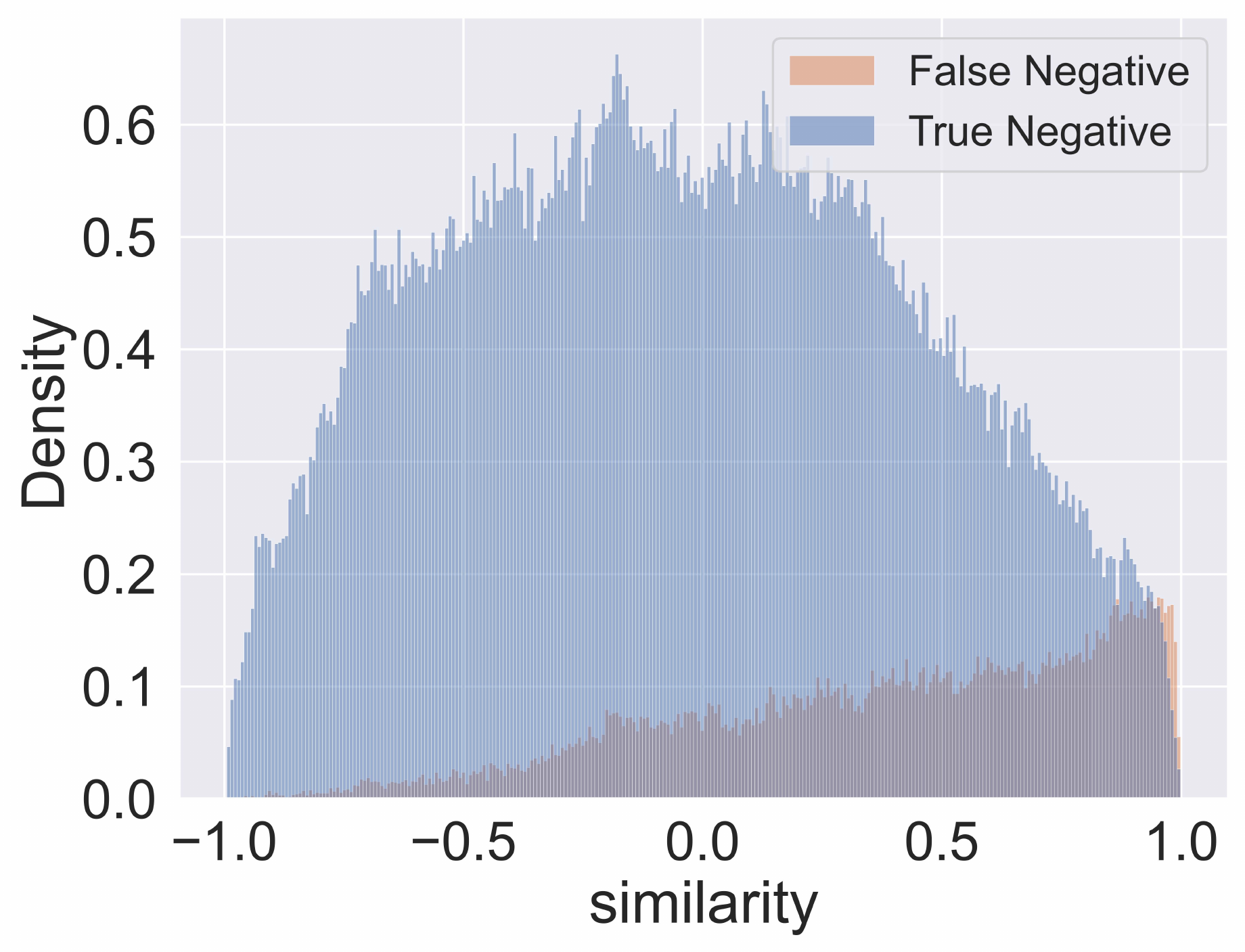}
    }
    \caption{More similarity histograms of negative samples in and GCL.}
    \label{fig2}
\end{figure}
As demonstrated in Figure~\ref{fig2}, the negatives distribution over similarity varies significantly across GCL and CL. These phenomena provide explainations for the failure of existing techniques that emphasize hard negatives in GCL. More specifically, they regard the negatives that are most similar to anchor points as hard ones, which is feasible in CL. However, for graph-structured data (see Figure~\ref{fig2-b}, Figure~\ref{fig2-c} and Figure~\ref{fig2-d}), many “hard” ones are false negatives indeed, which will undesirably push away the semantically similar samples.
\section{More Experimental Details}
\subsection{Transductive learning}
We adopt a two-layer GCN~\cite{kipf2016semi} as the encoder for transductive learning following previous works~\cite{velickovic2019deep,zhu2020deep}. We can describe the architecture of the encoder as,
\begin{equation}
\begin{aligned}
\mathrm{GC}_{i}(\boldsymbol{X}, \boldsymbol{A}) &=\sigma\left(\boldsymbol{\hat{D}}^{-\frac{1}{2}} \boldsymbol{\hat{A}} \boldsymbol{\hat{D}}^{-\frac{1}{2}} \boldsymbol{X} \boldsymbol{W_{i}}\right), \\
f(\boldsymbol{X}, \boldsymbol{A}) &=\mathrm{GC}_{2}\left(\mathrm{GC}_{1}(\boldsymbol{X}, \boldsymbol{A}), \boldsymbol{A}\right),
\end{aligned}
\end{equation}
where $\boldsymbol{\hat{A}}=\boldsymbol{A}+\boldsymbol{I}$ is the adjacency matrix with self-loops and $\boldsymbol{\hat{D}}=\sum_{i} \boldsymbol{\hat{A}}_{i}$ is the degree matrix, $\sigma(\cdot)$ is a nonlinear activation function. $\boldsymbol{W_{i}}$ is the learnable weight matrix.
\subsection{Inductive learning}
Considering the large scale of some graph datasets, we adopt a three-layer GraphSAGE-GCN with residual connections~\cite{he2016deep} as the encoder following DGI~\cite{velickovic2019deep} and GRACE~\cite{zhu2020deep}. The architecture of the encoder can be formulated as,
\begin{equation}
\begin{aligned}
\widehat{\mathrm{MP}}_{i}(\boldsymbol{X}, \boldsymbol{A}) &=\sigma\left(\left[\hat{\boldsymbol{D}}^{-1} \hat{\boldsymbol{A}} \boldsymbol{X} ; \boldsymbol{X}\right] \boldsymbol{W}_{i}\right),\\
f(\boldsymbol{X}, \boldsymbol{A}) &=\widehat{\mathrm{MP}}_{3}\left(\widehat{\mathrm{MP}}_{2}\left(\widehat{\mathrm{MP}}_{1}(\boldsymbol{X}, \boldsymbol{A}), \boldsymbol{A}\right), \boldsymbol{A}\right).
\end{aligned}
\end{equation}

\section{Hyper-parameters Analysis}
\begin{table*}[ht]
\caption{Hyperparameters specifications. ``Starting epoch'' $E$ refers to the epoch for fitting BMM; Initial weight $w_{init}$ is the initial weight for false negative component of mixture distribution; ``Iterations'' is the iterations $I$ of EM algorithm.}
\label{table1}
\centering
\resizebox{0.86\textwidth}{58.8pt}{
\begin{tabular}{ccccccccc}
\toprule
Dataset      & $\tau$ & \begin{tabular}[c]{@{}c@{}}learning\\ rate\end{tabular} & \begin{tabular}[c]{@{}c@{}}Training\\ epochs\end{tabular} & \begin{tabular}[c]{@{}c@{}}Hidden\\ dimension\end{tabular} & \begin{tabular}[c]{@{}c@{}}Activation\\ function\end{tabular} & \begin{tabular}[c]{@{}c@{}}Starting\\ epoch\end{tabular} & \begin{tabular}[c]{@{}c@{}}Initial\\ weight\end{tabular} & Iterations \\ \hline
Amazon-Photo & 0.3 & 0.01                                                    & 2500                                                      & 128                                                        & RRelu                                                         & 400                                                      & 0.15                                                     & 10         \\
Amazon-Computers    & 0.2 & 0.01                                                    & 2000                                                      & 128                                                        & RRelu                                                         & 400                                                      & 0.05                                                     & 10         \\
Coauthor-CS  & 0.2 & 0.0001                                                  & 1000                                                      & 256                                                        & RRelu                                                         & 400                                                      & 0.05                                                     & 10         \\
Wiki-CS      & 0.4 & 0.01                                                    & 4000                                                      & 256                                                        & PRelu                                                         & 50                                                       & 0.05                                                     & 10         \\ \hline
Reddit       & 0.4 & 0.0001                                                 & 80                                                        & 512                                                        & ELU                                                           & 40                                                       & 0.01                                                     & 10         \\
Filckr       & 0.4 & 0.0001                                                   & 80                                                       & 512                                                       & ELU                                                         & 20                                                      & 0.15                                                     & 10         \\ \hline
Ogbn-arXiv      & 0.4 & 0.0001                                                   & 100                                                     & 512                                                       & ELU                                                         & 20                                                      & 0.02                                                    & 10         \\ 

\bottomrule
\end{tabular}}
\end{table*}
\begin{figure*}[ht]
\centering  
    \subfigure[$E$]{
    \label{fig-a}
    \includegraphics[width=0.218\textwidth]{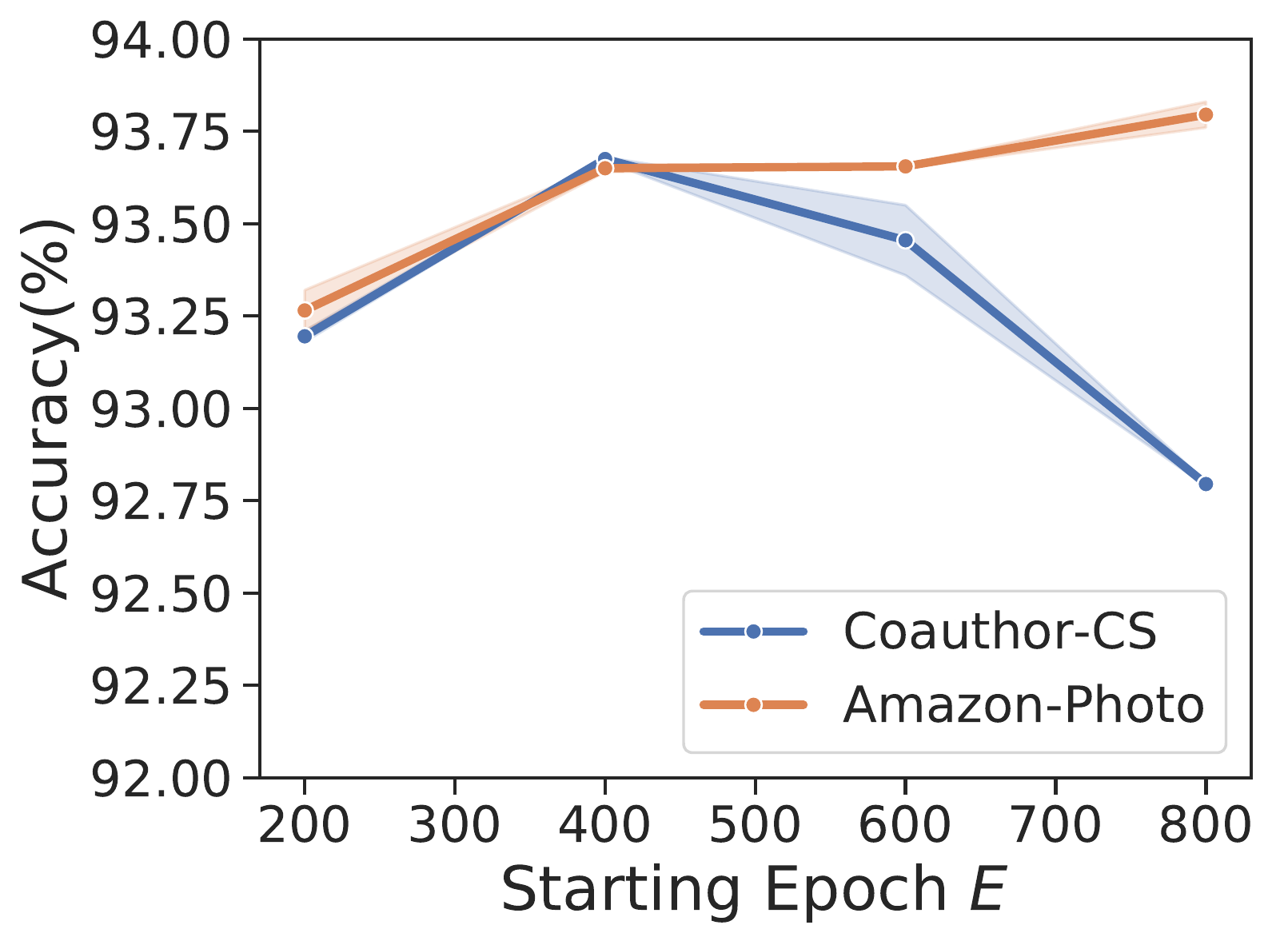}
    }
    \subfigure[$w_{init}$]{
    \label{fig-b}
    \includegraphics[width=0.218\textwidth]{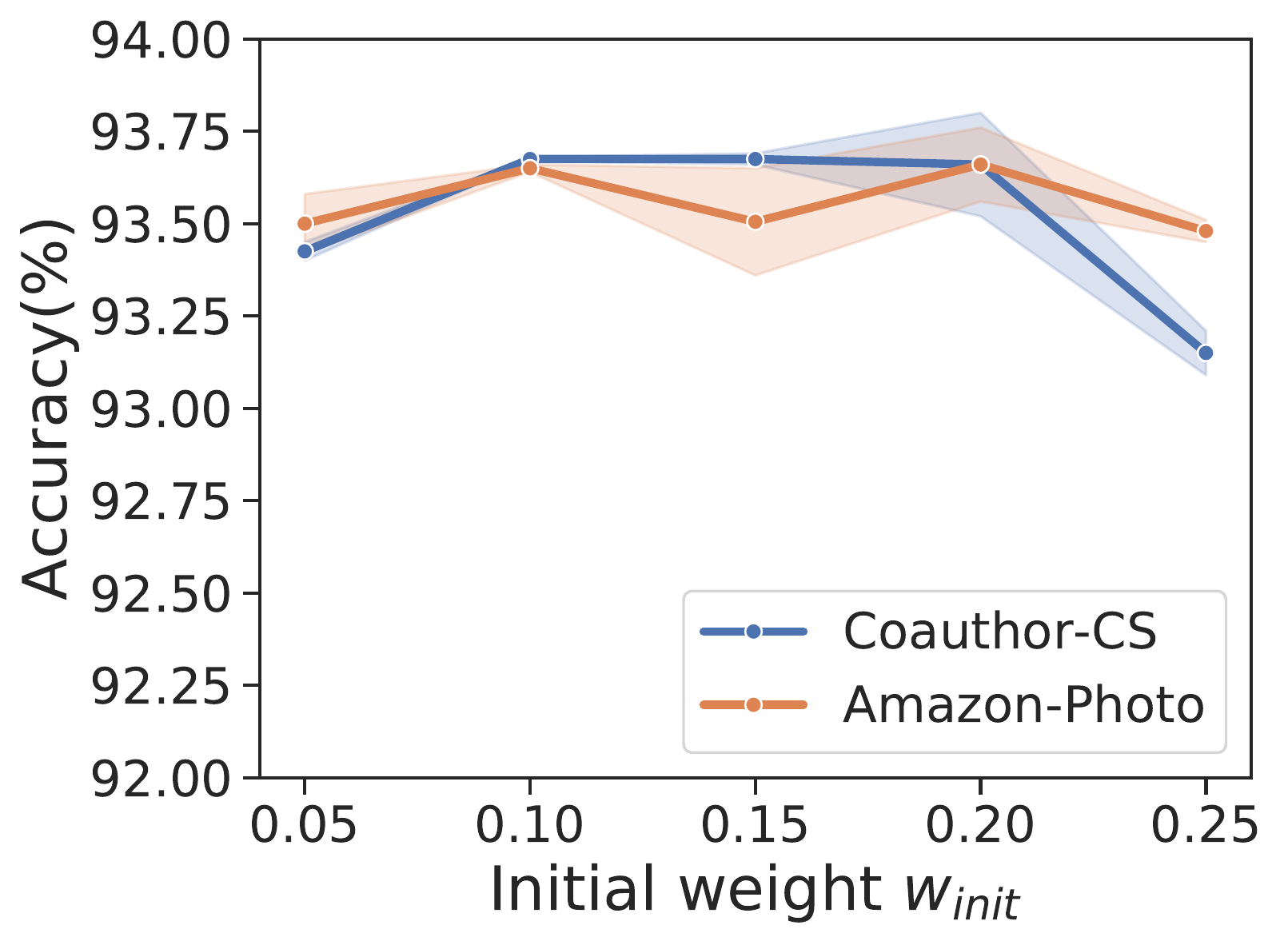}
    }
    \subfigure[$I$]{
    \label{fig-c}
    \includegraphics[width=0.218\textwidth]{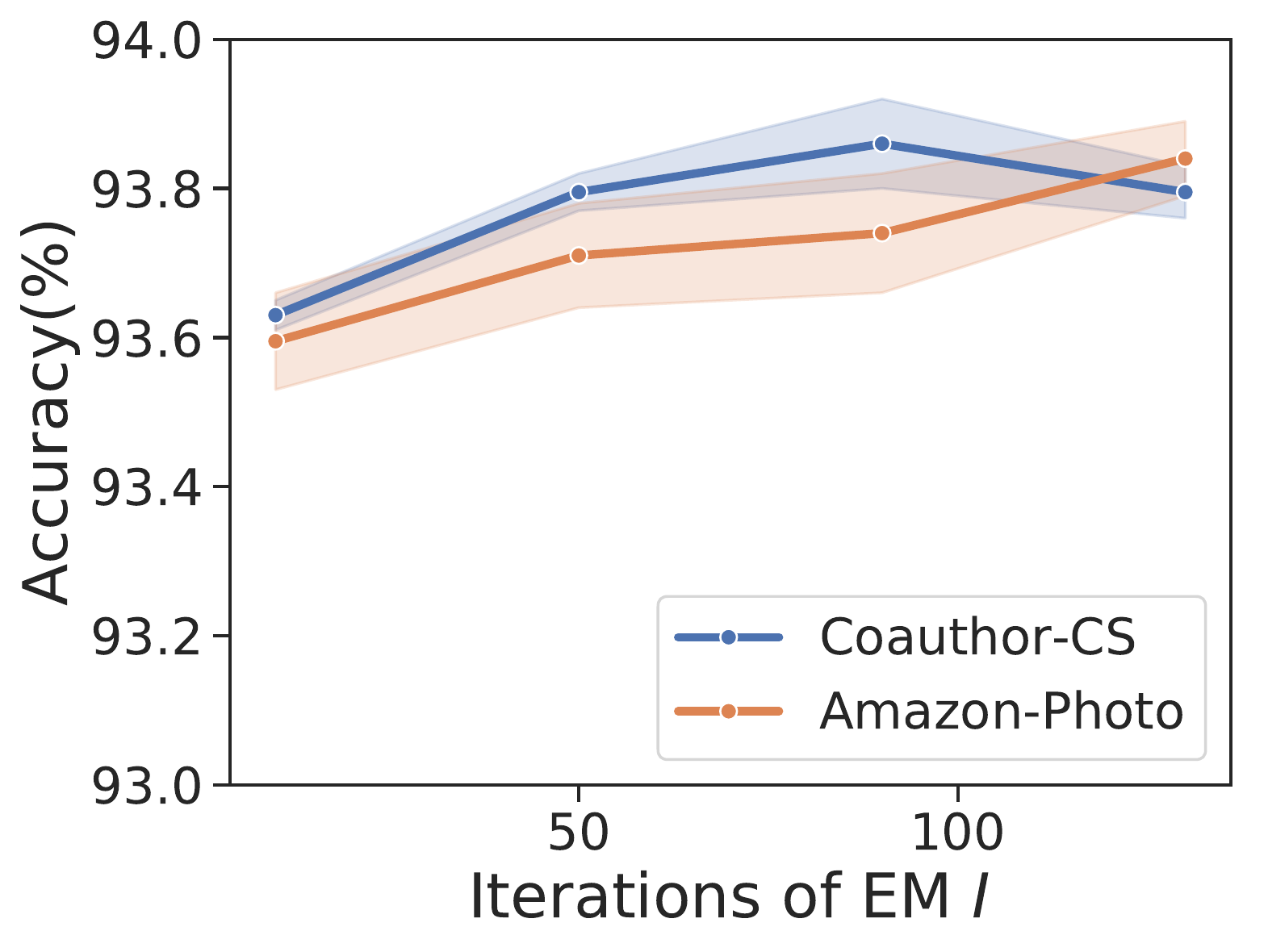}
    }
    \subfigure[$M^{\prime}$]{
    \label{fig-d}
    \includegraphics[width=0.218\textwidth]{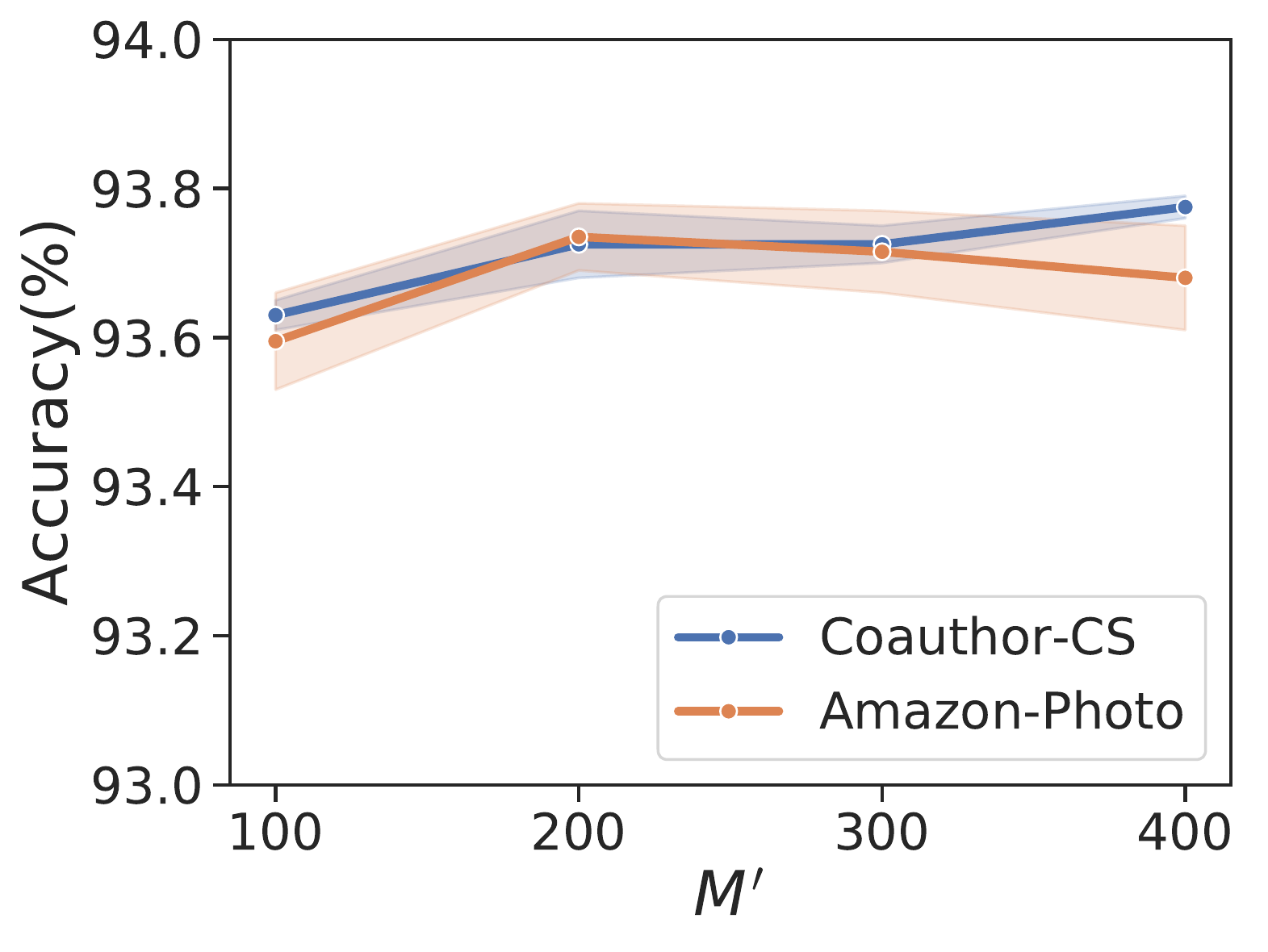}
    }
    \caption{Accuracy when varying $E$, $w_{init}$, $I$ and $M^{\prime} (M = NM^{\prime})$ for BMM.
    }
    \label{fig1}
\end{figure*}
Following GRACE~\cite{zhu2020deep} and GCA~\cite{zhu2021graph}, we initialize the model parameters with Glorot initialization~\cite{glorot2010understanding} and train the model using Adam SGD optimizer for all datasets. The $\ell_2$ weight decay factor is seted as $10^{-5}$ and the dropout rate is seted to zero. The parameters which control the sampling process of two views are the same as GRACE and GCA. Other hyper-parameters of ProGCL can be seen in Table~\ref{table1}. For transductive task, the two hyperparameters were chosen in a grid $E \in \{50, 100, 200, 300, 400, 600, 800\}$ and $w_{init}\in\{0.01, 0.05, 0.10, 0.15, 0.20, 0.25\}$. We further study the influence of $E$, $w_{init}$, $I$ and $M$ in Figure~\ref{fig1}. Firstly, the accuracy varies significantly across various starting epoch $E$ in Amazon-Photo while varies slightly in Coauthor-CS, which illustrates the importance of tuning $E$ varies across datasets. However, generally speaking, ProGCL's performance does not see sharp drop when varying $E$. This validates that BMM is flexible enough to fit various distributions of different epochs. As shown in figure~\ref{fig-b}, the initial weight $w_{init}$ does not influence much, which illustrates that BMM can split the two component well to an ideal ratio even if the initial weight is far from intuitive one (the reciprocal of classes number). As shown in figure~\ref{fig-c}, we can observe minor improvement when we iterate EM algorithm more times. However, this will introduce more computational overhead and thus we set $I=10$ in all experiments for convenience. As shown in Figure~\ref{fig2-d}, sampling more similarities for fitting the BMM can bring minor improvements, however, it will introduce much more computational overhead. In our experiments, we only sample $M^{\prime} = 100$ samples for each anchor point. Thus, the number of total selected samples $M = NM^{\prime}$. 
\section{Pseudo Codes of Inductive Learning.}
Different from transductive learning, it is not feasible to compute all the pairwise similarities for large-scale graph datasets. Thus, we extend ProGCL to inductive setting. The algorithm of ProGCL for inductive learning can be seen as follows. The equations mentioned in the algorithm can be seen in the main text.
\begin{algorithm}[ht]
\caption{ProGCL-weight $\And$ ProGCL-mix (Inductive)}\label{algorithm}
\begin{algorithmic}
\STATE {\bfseries Input:}$\mathcal{T}, \mathcal{G}, f, g, N,$ normalized cosine similarity $s$, epoch for fitting BMM $E$, $mode$ (`weight' or `mix'), empty list $\mathcal{P}$ for storing the estimated probabilities, Batchsize $B$. 
\FOR {$epoch = 0,1,2,...$}
    \STATE $k = 0$;\\
    \FOR {each mini-batch}
        \STATE $\mathcal{G}_{s}\left(\mathcal{V}_{s}, \mathcal{E}_{s}\right) \leftarrow$ Sampled sub-graph of $\mathcal{G}$ with sampling rules of GraphSAGE;\\
        \STATE Draw two augmentation functions $t\sim\mathcal{T}$, $t^{\prime}\sim\mathcal{T}$;\\
        \STATE $\widetilde{\mathcal{G}}_{s}^{(1)}=t(\mathcal{G}_s)$, $\widetilde{\mathcal{G}}_{s}^{(2)}=t^{\prime}(\mathcal{G}_s)$;\\
        \STATE $\mathcal{U}_s = f(\widetilde{\mathcal{G}}_{s}^{(1)})$, $\mathcal{V}_s = f(\widetilde{\mathcal{G}}_{s}^{(2)})$;\\
        \FOR{all $\boldsymbol{u}_{i}\in\mathcal{U}_s$ and $\boldsymbol{v}_{i}\in\mathcal{V}_s$}
            \STATE $s_{ij}= s(g(\boldsymbol{u}_{i}), g(\boldsymbol{v}_{i}))$;\\
            \IF{$epoch = E$}
                \STATE Compute $\mathcal{M}_{i,j} = p\left(c_t \mid s_{ij}\right)$ with Eq.~(\ref{eq3}) to Eq.~(\ref{eq11});\\
                \STATE $\mathcal{P}$.append($\mathcal{M}$).
            \ENDIF\\
            \IF{$epoch \geq E$}
                \IF{$mode$ = 'weight'}
                    \STATE Compute $\mathcal{J}_w$ with Eq.~(\ref{measure}) to Eq.~(\ref{eq14}); ($\mathcal{P}[$k$]$ as the estimated probabilities for $k$-th minibatch)\\
                    \STATE Update the parameters of $f,g$ with $\mathcal{J}_w$.\\
                \ENDIF
                \IF{$mode$ = 'mix'}
                    \STATE Compute $\mathcal{J}_m$ with Eq.~(\ref{eq15}) to Eq.~(\ref{eq18}); ($\mathcal{P}[$k$]$ as the estimated probabilities for $k$-th minibatch)\\
                    \STATE Update the parameters of $f,g$ with $\mathcal{J}_m$.
                \ENDIF
            \ELSIF{$epoch \textless E$}
                \STATE Compute $\mathcal{J}$ with Eq.~(\ref{eq1}) to Eq.~(\ref{eq2});\\
                \STATE Update the parameters of $f,g$ with  $\mathcal{J}$.\\
            \ENDIF
        \ENDFOR   
        \STATE $k = k + 1$.
    \ENDFOR
\ENDFOR
\STATE {\bfseries Output:} $f,g$.
\end{algorithmic}
\end{algorithm}

\end{document}